\newcommand{\mname}{\mbox{\textsc{mDP$^3$}\xspace}}
\newcommand{\unk}{~~~-~~~}
\newcommand{\N}{\mathbb{N}}
\newcommand{\mL}{\mathbf{L}}
\newcommand{\I}{\mathbf{I}}
\newcommand{\mr}{\mathbf{r}}
\newcommand{\T}{\mathcal{T}}
\newcommand{\R}{\mathcal{R}}
\newcommand{\nocheckmark}{\ding{55}}
\newcommand{\semicheckmark}{{\ding{51}\kern-0.65em\ding{55}}}
\definecolor{green}{HTML}{34A853}
\definecolor{red}{HTML}{EA4335}
\newcommand{\gainfoot}[1]{\scriptsize\color{green}\textbf{#1$\uparrow$}}
\newcommand{\lossfoot}[1]{\scriptsize\color{red}\textbf{#1$\downarrow$}}
\newcommand{\emptyfoot}[1]{\scriptsize\textbf{\phantom{#1$\uparrow$}}}
\newtheorem{definition}{Definition}
\newtheorem{theorem}{Theorem}
\newtheorem{lemma}{Lemma}
\definecolor{iccvblue}{rgb}{0.21,0.49,0.74}
\title{\textsc{mDP$^3$}: A Training-free Approach for List-wise Frame Selection in Video-LLMs}
\author{
Hui Sun\textsuperscript{1,2}, Shiyin Lu\textsuperscript{3}, Huanyu Wang\textsuperscript{1,2}, Qing-Guo Chen\textsuperscript{3}, \\
Zhao Xu\textsuperscript{3}, Weihua Luo\textsuperscript{3}, Kaifu Zhang\textsuperscript{3}, Ming Li\textsuperscript{1,2,}\thanks{Ming Li is the corresponding author.} \\
\textsuperscript{1}National Key Laboratory for Novel Software Technology, Nanjing University, China \\
\textsuperscript{2}School of Artificial Intelligence, Nanjing University, China \\
\textsuperscript{3}Alibaba Group, Hangzhou, China \\
{\tt\small 
	\{sunh, wanghy, lim\}@lamda.nju.edu.cn
} \\
{\tt\small 
	\{running.lsy, qingguo.cqg, changgong.xz, weihua.luowh, kaifu.zkf\}@alibaba-inc.com}
}
\begin{document}
\maketitle

\begin{abstract}
Video large language models~(Video-LLMs) have made significant progress in understanding videos.
However, processing multiple frames leads to lengthy visual token sequences,
presenting challenges such as the limited context length cannot accommodate the entire video,
and the inclusion of irrelevant frames hinders visual perception.
Hence, effective frame selection is crucial.
This paper emphasizes that frame selection should follow three key principles: query relevance, list-wise diversity, and sequentiality.
Existing methods, such as uniform frame sampling and query-frame matching, do not capture all of these principles.
Thus, we propose Markov decision determinantal point process with dynamic programming~(\mname{}) for frame selection,
a training-free and model-agnostic method that can be seamlessly integrated into existing Video-LLMs at test time.
Our method first estimates frame similarities conditioned on the query using a conditional Gaussian kernel within the reproducing kernel Hilbert space~(RKHS).
We then apply the determinantal point process~(DPP) to the similarity matrix to capture both query relevance and list-wise diversity.
To incorporate sequentiality, we segment the video and apply DPP within each segment, conditioned on the preceding segment selection, modeled as a Markov decision process~(MDP) for allocating selection sizes across segments. 
Theoretically, \mname{} provides a \((1 - 1/e)\)-approximate solution to the NP-hard list-wise frame selection problem with pseudo-polynomial time complexity, demonstrating its efficiency.
Empirically, \mname{} significantly outperforms existing methods, verifying its effectiveness and robustness.
\end{abstract}

\vspace{-0.3cm}
\section{Introduction}
\label{sec:intro}

Large language models~(LLMs) have exhibited remarkable performance across various natural language processing tasks~\cite{openai2023chatgpt,yang2024qwen2,dubey2024llama,anthropic2024introducing}.
Concurrently, transformers for visual tasks~(\eg, ViT~\cite{dosovitskiy2021an}) and cross-modal visual-language models~(VLMs), 
such as CLIP~\cite{radford2021learning} and BLIP~\cite{li2022blip}, have advanced rapidly.
Riding the waving of LLMs, multimodal large language models~(MLLMs) are transforming the landscape of computer vision globally~\cite{wang2024videoagent,Qwen2VL,lu2024ovis,wang2024videotree,TanKoala2024}.

MLLMs typically use the visual encoder from VLMs and a new-trained projector to extract a long sequence of visual tokens from images.
The combined sequence of visual and text tokens is then fed into LLMs for visual understanding~\cite{liu2023llava,cheng2024videollama2,li2023mvbench}.
However, a video contains hundreds or even thousands of frames. 
Directly concatenating sequences from multiple frames and feeding them into Video-LLMs poses several challenges: 
1)~LLMs' limited context length can not accommodate excessively long visual sequences;
2)~Redundant and duplicate frames can hinder visual perception, as irrelevant frames may cause ``lost-in-the-middle'' issues~\cite{liu2024st,yu2024frame}.
3)~Edge-deployed LLMs face resource constraints when processing overly long input sequence, and
4)~Proprietary models charge based on token usage, making excessive frame input costly.

\begin{figure}[t]
	\centering
  	\includegraphics[width=0.8\linewidth]{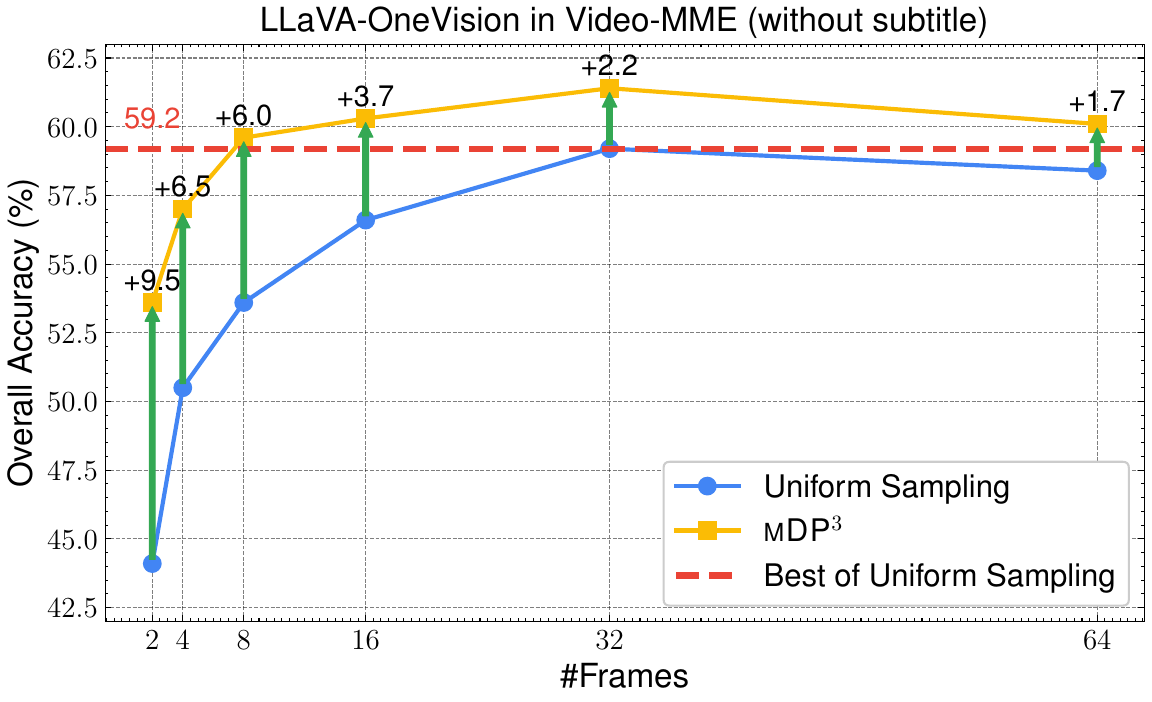}
	\caption{
	    Overall accuracy (\%) on Video-MME~(without subtitles) for LLaVA-OneVision, 
    	comparing uniform sampling and \mname{}\ selected from 128 frame candidates.
	}
	\label{fig:gain_on_videomme}
\end{figure}

As shown in the empirical example in \cref{fig:gain_on_videomme}, 
the blue line illustrates that the marginal benefits of increasing the number of frames diminish, 
with performance even declining when using 64 frames instead of 32 frames. 
This indicates a law of diminishing marginal utility as the number of frames increases, 
and additional frames may even have negative effects. 
Hence, selecting a limited number of key frames is crucial for effective and efficient video understanding.

In previous research, uniform frame sampling is a common strategy in Video-LLMs~\cite{miao2023towards,lin2023video,zhang2024video,wang2024internvideo2}. 
However, it is arbitrary, especially in video question-answering~(VidQA) tasks, 
as it disregards the current query, potentially missing important key frames and including irrelevant ones. 
Furthermore, \citet{liang2024keyvideollm,wang2024weakly,yu2024self} have explored using the query to retrieve the most relevant frames.
However, these point-wise frame selection methods overlook the list-wise relationships among selected frames, including: 
1)~\emph{Diversity}: Videos often contain many similar frames; redundantly selecting them increases the computational burden during inference with little information gain and may hinder the usage of text information.
2)~\emph{Sequentiality}: The inherent sequential nature of frames is essential. 
For instance, when asking, ``What did the chef take \underline{after} adding the spices?'' in a cooking video, frames depicting the steps following the addition of the spices are essential.

Building on the insights from the above analysis, we emphasize that effective frame selection in Video-LLMs should adhere to three key principles:
\textbf{query relevance, list-wise diversity, and sequentiality.} 
Prior research has addressed some of them but has not fully encompassed all of them, particularly the latter two~\cite{jin2024chat,wang2025ViLA,tang2025,hu2025m}.
Modeling list-wise relationships in subset frame selection presents two main challenges:
1)~the lack of available supervised list-wise datasets; 
2)~even if a dataset is constructed to learn a list-wise scoring function, finding the optimal subset with the highest score remains an NP-hard problem that cannot be solved in polynomial time complexity~\cite{bian2020efficient,chen2018fast}.

In this paper, we propose a training-free, model-agnostic list-wise frame selection method called 
\underline{M}arkov~\underline{D}ecision \underline{D}eterminantal~\underline{P}oint \underline{P}rocess~with \underline{D}ynamic~\underline{P}rogramming, abbreviated as \mname{}.
We reuse pretrained VLMs to compute list-wise score for the subset using Determinantal Point Process~(DPP), 
thereby eliminating the need for collecting a supervised dataset or training a scoring model from scratch.
DPP was originally introduced to characterize the Pauli exclusion principle, where no two fermions can occupy the same quantum state.
This repulsion, interpreted as list-wise diversity, is effectively modeled by DPP~\cite{macchi1975coincidence},
which can obtain a $(1-1/e)$-approximate solution with polynomial complexity~\cite{chen2018fast}.
In \mname{}, we extend standard DPP to consider query relevance, list-wise diversity, and sequentiality for frame selection in VidQA.

Specifically, we define a conditional multiple Gaussian kernel~(CMGK) to estimate high-dimensional frame similarities conditioned on the query within the reproducing kernel Hilbert space~(RKHS), rather than using a query-agnostic low-dimensional measure such as cosine similarity. 
We then apply DPP to the conditional similarity matrix generated by CMGK to capture the relevance of frames to the current query and ensures list-wise diversity among them. 
To incorporate sequentiality, we partition the video into continuous segments and apply DPP within each segment, conditioned on the selection from the immediately preceding one.
This process is modeled as a Markov Decision Process~(MDP) to optimize the allocation of the total selection size $k$ across segments, resulting in a solution with pseudo-polynomial time complexity.

To evaluate \mname{}, we integrate it as a plug-and-play process at the test time of state-of-the-art~(SOTA) Video-LLMs, including VILA-V1.5~\cite{lin2024vila}, MiniCPM-V2.6~\cite{yao2024minicpm}, LLaVA-OneVision~\cite{li2024llava}, and Ovis2~\cite{ovis2}.
We conduct extensive experiments on three widely used long-video benchmarks: Video-MME~\cite{fu2024video}, MLVU~\cite{MLVU}, and LongVideoBench~\cite{wu2024longvideobench}.
Experimental results show significant performance improvements over baseline methods.

Furthermore, we summarize our main contributions:
\begin{enumerate}
    \item We demonstrate that effective frame selection should account for query relevance, list-wise diversity, and sequentiality. The proposed \mname{} is the first to address all three aspects comprehensively.
    \item \mname{} is effective, offering a \((1-1/e)\)-approximate solution to the NP-hard list-wise subset frame selection problem, achieving significant performance improvements over existing baselines.
    \item \mname{} is efficient, extending DPP methods by optimally allocating the selection capacity \(k\) across the video for VidQA, maintaining pseudo-polynomial time complexity, even though this allocation problem is NP-hard.
    \item \mname{} is training-free and model-agnostic, enabling seamless plug-and-play integration with various existing Video-LLMs at test time.
\end{enumerate}

\section{Realted Work}\label{sec:related_work}

MLLMs have achieved impressive performance in visual understanding tasks by utilizing VLMs to process an image into a sequence of visual tokens, with lengths ranging from 64 to 729~\cite{radford2021learning,zhai2023sigmoid}, which are then input into LLMs~\cite{liu2024llavanext,yang2024qwen2,li2024llava,yao2024minicpm,chen2024far}.
However, in video understanding, a video contains hundreds or even thousands of frames, resulting in a long sequence of visual tokens.
As a result, feeding an entire video into Video-LLMs becomes impractical. 
Some research has explored extending the context length of LLMs~\cite{wan2023efficient,zhang2024longva,xiong2024effective,xue2024longvila,song2024moviechat},
but these methods still struggle with computational burden and the ``lost-in-the-middle'' issue, where important context may be overlooked in long sequences~\cite{liu2024st,yu2024frame}.

Most existing Video-LLMs rely on uniform frame sampling~\cite{chen2023internvl,li2023videochat,liu2024oryx,shu2024video}, 
which arbitrarily disregards the current query, often missing relevant frames and including irrelevant ones.
Traditional key-frame methods that detect significant transitions between adjacent frames also face this limitation~\cite{sheena2015key,nasreen2013key,wolf1996key}.
Some studies have attempted to use the query to retrieve the top-$k$ most relevant frames with VLMs but fail to consider frame diversity~\cite{wang2024weakly,yu2024self},
often resulting in redundant frame selection, and increased computational burden without meaningful information gain.
Some cluster-based methods attempt to regularize pair-wise diversity among frames but still fail to capture combinatorial relationships in a list-wise manner~\cite{keplerlab_katna,jin2024chat}. 
Recently, \citet{yu2024frame} proposed a frame selection method tailored for VILA~\cite{lin2024vila}, 
which requires collecting a supervised dataset and end-to-end training of a selector, limiting its generalizability to other Video-LLMs.
Furthermore, these method does not explicitly address list-wise diversity or sequentiality, both of which are essential for effective frame selection.

The DPP was introduced by~\citet{macchi1975coincidence} to model the distribution of fermions in thermal equilibrium, where no two fermions can occupy the same quantum state, resulting in an ``anti-bunching'' effect that can be interpreted as diversity and effectively modeled using DPP. 
DPP has been widely applied to characterized list-wise diversity in subset selection problems~\cite{kulesza2012determinantal,moss2023inducing,celis2018fair,li2024evaluation,chen2018fast}. 
However, standard DPP does not account for sequentiality. 
\citet{gong2014diverse} addressed this by segmenting the video and modeling local diversity within each segment conditioned on the selections from the preceding segment.
Nevertheless, allocating the total selection size \(k\) across segments remains challenging.
\citet{zheng2021k} treat it as an integer programming problem, relaxing problem constraints to find an approximate solution.
In contrast, we model DPP for sequentiality as an MDP, enabling optimal allocation of \(k\) through a dynamic programming algorithm with pseudo-polynomial time complexity. 
This approach avoids constraint relaxation and can find the optimal allocation efficiently in practice.

\section{Method}\label{sec:method}
In this paper, we focus on VidQA, which can be generalized to other video understanding tasks such as video summarization and grounding~\cite{xiao2024can,liu2025timecraft}.
This task is formulated as \((V, q) \mapsto \text{answer}\), where \(V=\{f_i\}_{i=1}^{n}\) is a video with \(n\) frames, \(f_i\) is the \(i\)-th frame, and \(q\) is the query.
The index set of frames is denoted by \(\N_n := \{1, 2, \dots, n\}\), 
and the list-wise $k$ frame selection can be expressed as: 
\begin{equation}\label{eq:score}
    S^* = \mathop{\mathrm{argmax}}_{S \subseteq \N_n , \left|S\right|=k}\ \mathrm{Score}(S)\,.
\end{equation}
This problem presents two challenges:
1) How can we design an effective \(\mathrm{Score}\) function that simultaneously addresses query relevance, list-wise diversity, and sequentiality?
2) How can we tackle the NP-hard challenges of the list-wise subset frame selection problem to find an effective solution with practical time complexity?

In \cref{subsec:method_sim}, we introduce the conditional multiple Gaussian kernel~(CMGK) within the reproducing kernel Hilbert space~(RKHS) to estimate high-dimensional similarities between frames conditioned on the current query.
In \cref{subsec:method_dpp}, we employ the determinantal point process~(DPP) based on the conditional similarity matrix of the frames, capturing both query relevance and list-wise frame diversity. 
Moreover, in \cref{subsec:method_kdp}, we partition the video into continuous segments and model frame selection as a Markov decision process~(MDP) to ensure list-wise diversity within local segments while maintaining sequentiality across segments.

\subsection{High-dimensional Conditional Similarity}\label{subsec:method_sim}
With the success of VLMs~\cite{radford2021learning,zhai2023sigmoid,alayrac2022flamingo},
frames and queries can be mapped into a unified latent embedding space.
For conciseness, we reuse \(f_i\) and \(q\) to denote the embeddings of the \(i\)-th frame and the query, respectively, to avoid defining redundant symbols. 
This is expressed as:
\begin{equation}\label{eq:emb} 
	f_i \leftarrow \mathrm{VLM_{vision}}(f_i) \in \mathbb{R}^d, \ q \leftarrow \mathrm{VLM_{text}}(q) \in \mathbb{R}^d \,,
\end{equation}
where \(\mathrm{VLM_{vision}}()\) and \(\mathrm{VLM_{text}}()\) denote the vision and text encoders of the VLM, respectively.
Notably, since the vision encoder is commonly used in MLLMs, the additional parameters introduced by the text encoder are negligible.

Instead of directly estimating similarity using cosine similarity or the inner product of embeddings, we map the embeddings into an RKHS via the reproducing kernel feature map \(\phi \in \mathcal{H}_k\), 
where, according to the kernel trick~\cite{scholkopf2005support}, the high-dimensional similarity in RKHS is expressed by \(\left<\phi(x), \phi(y)\right> = k(x, y)\).
We employ a characteristic multi-kernel defined as a convex combination of multiple positive semi-definite~(PSD) kernels \(\{k_u\}\)~(\eg Gaussian kernel): 
\begin{equation}\label{eq:k_family}
	\mathcal{K} \triangleq \left\{ 
		k = \sum_{u=1}^{U} \beta_u k_u : \sum_{u=1}^{U} \beta_u = 1, \beta_u \geq 0, \forall u 
	\right\}\,.
\end{equation}
The multi-kernel utilizes various kernels to enhance the estimation of high-dimensional similarity, thereby providing a principled method for optimal kernel selection~\cite{long2015learning,sun2023enhancing}.

In VidQA tasks, both frame-query relevance and list-wise diversity among the selected frames are important. 
Therefore, estimating frame similarity should consider the query relevance.
To address this, we propose the CMGK as:
\begin{equation}\label{eq:cond_k}
    \tilde{k}(f_i, f_j | q) = g(f_i, q)k(f_i, f_j)g(f_j, q),\quad g, k \in \mathcal{K}\,.
\end{equation}
This conditional kernel refines frame similarity by considering the query, reducing the importance of irrelevant frames.

Consequently, we construct a high-dimensional conditional similarity matrix that captures the similarities among frames while considering the query relevance, expressed as:
\begin{equation}\label{eq:tilde_L}
    \tilde{\mL} \in \mathbb{R}^{n \times n}, \quad \tilde{L}_{ij} = \tilde{k}(f_i, f_j | q)\,.
\end{equation}
This can also be expressed as:
\begin{equation}\label{eq:cmgk}
    \tilde{\mL} = \mathrm{diag}(\mr)\cdot \mL \cdot \mathrm{diag}(\mr)\,,
\end{equation}
with frame similarities \(\mL \in \mathbb{R}^{n \times n}, \mL_{ij} = k(f_i, f_j), k \in \mathcal{K}\), and frame-query relevances \(\mr \in \mathbb{R}^n, \mr_i = g(f_i, q), g \in \mathcal{K}\).

\subsection{Determinantal Point Process~(DPP)}\label{subsec:method_dpp}

DPP was initially introduced to model fermion repulsion in quantum physics and has proven effective in modeling list-wise diversity in machine learning~\cite{ye2023compositional,chen2018fast}.
We apply DPP to the high-dimensional conditional similarity matrix \(\tilde{\mL}\) to capture both query relevance and list-wise diversity in frame selection.
In fact, it represents the geometric volume of the subset in the RKHS; a larger volume indicates a more well-spread subset, leading to better list-wise diversity.

Formally, DPP \(\mathcal{P}\) serves as a probability measure over the \(2^n\) subsets of frames sampled without replacement from a video \(V = \{f_i\}_{i=1}^{n}\).
The probability of selecting a subset of frame indices \(S \subseteq \N_n\) is defined as:
\begin{equation}\label{eq:dpp}
	\mathcal{P}(S)=\frac{\det(\tilde{\mL}_S)}{\det({\tilde{\mL} + \I})} \propto \det(\tilde{\mL}_S) \,,
\end{equation}
where \(\tilde{\mL}_S \equiv [\tilde{\mL}_{ij}]_{i,j \in S} \in \mathbb{R}^{|S|\times|S|}\) is a PSD matrix denoting the submatrix of \(\tilde{\mL}\) corresponding to the selected frame indices in \(S\).
Based on \cref{eq:cmgk} and \cref{eq:dpp}, we can derive the unnormalized log-probability of the subset \(S\) as:
\begin{equation}
	\log\left(\det(\tilde{\mL}_S)\right) = \sum_{i\in S}\log\left(\mr_i^2\right) + \log\left(\det(\mL_S)\right)\,,
\end{equation}
which clearly shows how to capture query relevance~(\ie \(\mr\)) and list-wise frame diversity~(\ie \(\det(\tilde{\mL}_S)\)) in frame selection.
Without loss of generality, we introduce a weighting factor to trade-off query relevance and list-wise diversity:
\begin{equation}\label{eq:log_dpp}
	\log \left( \det(\tilde{\mL}_S) \right) = \frac{1}{\lambda} \sum_{i \in S} \log\left(\mr_i^2\right) + \log\left(\det(\mL_S)\right)\,.
\end{equation}
This is a more general form of the standard DPP, equivalent to scaling the bandwidth \(h\) in the Gaussian kernel \mbox{\(k(x,y) = \exp\left(-\frac{\| (x-y)/h \|^2}{2\sigma^2}\right)\)} from \(h\) to \(\sqrt{\lambda} \cdot h\).

Frame selection follows the law of diminishing marginal utility, 
as the marginal gain from selecting additional frames is non-increasing~(as shown in \cref{fig:gain_on_videomme}).
This phenomenon is formally known in statistics as \emph{submodularity}, 
which is a well-established property of DPP, as shown in previous research~\cite{fujishige2005submodular,krause2014submodular}.
Selecting the optimal subset to maximize the submodular score function~\cref{eq:dpp,eq:log_dpp}, known as the submodular maximization problem in subset selection, is generally NP-hard.
Fortunately, there is a popular greedy algorithm can solve this problem,
which offers polynomial complexity and guarantees a \((1 - 1/e)\)-approximation, \ie, the solution is at least \((1 - 1/e)\) of the optimal score~\cite{chen2018fast}.

Specifically, we employ maximum a posteriori~(MAP) inference~\cite{chen2018fast,ye2023compositional} to solve DPP~(corresponding to \cref{eq:dpp,eq:log_dpp}).
We initialize the selected subset as \(S \leftarrow \emptyset\).
In each iteration, we select a new frame that maximizes the marginal gain in log-probability, given by:
\begin{equation}\label{eq:greedy}
	j = \mathop{\mathrm{argmax}}\limits_{i\in \mathbb{N}_n \backslash S} \log\left(\det(\tilde{L}_{S \cup \{i\}})\right) - \log\left(\det(\tilde{L}_S)\right)\,.
\end{equation}
The selected frame index \(j\) is then added to \(S\) until the subset size reaches the capacity limit \(k\).
By using Cholesky decomposition, the time complexity is reduced from \(\mathcal{O}(nk^3)\) to \(\mathcal{O}(nk)\) per iteration through incremental updates to the Cholesky factor~\cite{chen2018fast}.
Therefore, the overall time complexity of the MAP inference is \(\mathcal{O}(nk^2)\), compared to the vanilla KNN, the additional latency is negligible while \(k \ll n\)~\cite{ye2023compositional}.

\subsection{Markov Decision DPP}\label{subsec:method_kdp}

As described above, applying DPP to the conditional similarity matrix effectively captures both query relevance and list-wise diversity.
However, standard DPP treats video frames as independently permutable, disregarding the sequential structure inherent in video.
To address this, existing works on video summarization~\cite{gong2014diverse,zheng2021k} segment a video into multiple consecutive short segments and apply DPP within each segment, conditioned on the selections from the preceding segment.
This approach effectively models sequentiality by \emph{enhancing diversity among neighboring frames while reducing it for temporally distant ones.}
In VidQA, frames relevant to the query are often concentrated in specific segments rather than being uniformly distributed across the video. 
This presents a new challenge in allocating the total selection size \(k\) across multiple segments.
Focusing on video summarization, \citet{gong2014diverse} allocate \(k\) uniformly across segments,
while \citet{zheng2021k} formulate it as an integer programming problem and relax the constraints to find an approximate solution.
Instead, we focus on general VidQA and model this sequential DPP across the segment sequence as an MDP. 
We propose a dynamic programming algorithm that can optimally allocates the total selection size \(k\) across segments with pseudo-polynomial complexity, without relaxing problem constraints.

First, we segment a video into multiple segments of equal length \(m\), so that a video with \(n\) frames is divided into \(T = \lceil n/m \rceil\) segments. 
The corresponding frame indices of the video, \(\N_n\), are divided as \(\N_n = \bigcup_{t=1}^{T} N_t\), 
where \(N_t\) denotes the frame indices of the \(t\)-th segment, with $|N_t|=m$.
Moreover, let the selected subset from the \(t\)-th segment be denoted as \(S_t \subseteq N_t\).
Therefore, when applying DPP to the \(t\)-th segment, conditioned on the selection in the \((t-1)\)-th segment \(S_{t-1}\), 
the DPP conditional distribution~\cite{gong2014diverse} for the current segment \(S_t\) is expressed as:
\begin{equation}\label{eq:cond_dpp}
    \mathcal{P}(S_t | S_{t-1}) = \frac{\det(\tilde{\mL}_{S_{t-1} \cup S_t})}{\det(\tilde{\mL}_t + \I_t)} \,,
\end{equation}
where \(\tilde{\mL}_t\) is the submatrix of \(\tilde{\mL}\) for the indices \(S_{t-1} \cup N_t\), and \(\I_t\) is a diagonal matrix with zeros for \(S_{t-1}\) and ones for \(N_t\).
Thus, the joint distribution of subsets can be written as:
\begin{equation}\label{eq:seq_score}
    \mathcal{P}(S_1, S_2, \dots, S_T) 
        = \mathcal{P}(S_1) \prod_{t=2}^{T} \mathcal{P}(S_t \mid S_{t-1}) \,.
\end{equation}
If the selection size for the \(t\)-th segment is predetermined as \(|S_t| = k_t\),
the MAP inference in \cref{eq:seq_score} can be performed sequentially, from the first to the last~\cite{gong2014diverse}, similar to Bayesian belief updates (such as Kalman filtering):
\begin{equation}\label{eq:map_inf}
    S_t^* = \mathop{\mathrm{argmax}}_{S_t \subseteq N_t, |S_t|=k_t} \mathcal{P}(S_t \mid S_{t-1}^*) \,.
\end{equation}
This implies that when determining one of \(k_t\) and \(S_t^*\), the other is uniquely determined, \ie, \(k_t \leftrightarrows S_t^*\).

Determining the optimal allocation of selection size \(k_t\) for each segment is challenging. 
We observe that the MAP inference~(\cref{eq:map_inf}) of the DPP conditional distribution~(\cref{eq:cond_dpp}) uniquely depends on a triplet state, consisting of:
1)~the candidate indices in the current segment \(N_t\),
2)~the selections from the preceding segment \(S_{t-1}^*\), determined by $k_{t-1}$, and
3)~the remaining capacity for further selections \(k - C_{t-1}\), where \(C_{t-1}\) denotes the total selection size before the \((t-1)\)-th~(inclusive) segment.
Consequently, this MAP inference can be formulated as an MDP, where the state is defined as the triplet \((N_t, k_{t-1}, C_{t-1})\).
Moreover, the segment index \(t-1\) can uniquely determine \(N_t\).
Therefore, the state can be represented as \((t \in [1, T], k_t \in [0, k], C_t \in [0, k])\).
The state transition from \(t-1\) to \(t\) is deterministic when the action of selecting \(k_t\) frames from the \(t\)-th segment is given, which can be expressed as:
\begin{equation}
	(t-1, k_{t-1}, C_{t-1})
	\xrightarrow{\hspace{-0.1cm}
		\scriptsize\begin{array}{c}
	        \text{selecting } S_t \\
	        \text{with } |S_t|=k_t
    	\end{array}
	\hspace{-0.1cm}}
	(t, k_t,\overbrace{C_{t-1}+ k_t}^{C_t})\,.
\end{equation}
The corresponding reward in MDP is defined as
\begin{equation}\label{eq:reward}
    \R_{S_t}(t-1, k_{t-1}, C_{t-1}) = \log \left(\mathcal{P}(S_t \mid S_{t-1}^*)\right) \,,
\end{equation}
where \(S_{t-1}^*\) is uniquely determined by \(k_{t-1}\) in the state $(t-1, k_{t-1}, C_{t-1})$.
This reward can be efficiently computed using a greedy DPP algorithm applied to the conditional DPP (\ie, \cref{eq:cond_dpp}).
Therefore, the list-wise \(\mathrm{Score}\) function in \cref{eq:score}, based on \cref{eq:seq_score}, is defined as:
\begin{equation}\label{eq:opt_score}
	\mathrm{Score}(S_1,S_2,\dots,S_T) =
		\sum_{t=0}^{T-1} \R_{S_{t+1}}(t, k_t, C_t)\,.
\end{equation}
Using this definition and the \((1 - 1/e)\)-approximate greedy DPP algorithm, we can apply a dynamic programming approach to identify a solution \(\hat{S}\) such that \mbox{\(\mathrm{Score}(\hat{S}) \geq (1 - 1/e) \cdot \mathrm{Score}(S^*)\)}, with a closely \(\mathcal{O}(nk^4)\) time complexity~(proof provided in the Appendix).

We define a value function \(Q(t, k_t, C_t)\) as the maximum accumulated reward from the initial state \((0, 0, 0)\) to the current state \((t, k_t, C_t)\). 
Consequently, 
\begin{equation}
    Q^*(t, C_t) = \max\limits_{0 \leq k_t \leq k - C_t} Q(t, k_t, C_t)
\end{equation}
represents the maximum accumulated reward for selecting \(C_t\) frames up to and including the \(t\)-th segment. 
The solution to \cref{eq:opt_score} is given by \(\hat{S} = \mathop{\mathrm{argmax}}_{S} Q^*(T, k)\).

\begin{algorithm}[!t]
    \caption{\label{alg} \mname}
    \KwIn{Video $V = \{f_i\}_{i=1}^{n}$, query $q$, select size $k$.}
    \KwOut{A set of frame indices $S$ with $|S| = k$.}
    
    Initialize $Q_{0,0}^* = 0$ and $\T_{0,0} = [\,]$\;
    Extract embeddings $f_i$ and $q$ using \cref{eq:emb}\;
    
    \For{$t \gets 1$ to $T = \lceil n/m \rceil$}{
        Compute $\mr_t$, $\mL_t$, and $\tilde{\mL}_t$ using \cref{eq:k_family,eq:cond_k,eq:tilde_L,eq:cmgk}\;
        Compute offset $\mathrm{o} = -\log(\det(\tilde{\mL}_t + \I_t))$\;
        
        \For{$C_{t-1} \gets 0$ to $k$}{
            \For{$k_{t-1} \gets 0$ to $\min(m, k, k - C_{t-1})$}{
            	Get $C_t = C_{t-1}+k_{t-1}$\;
	            Get $\mathrm{rwd}^* = \R_{S_t}^*(t-1, C_{t-1})$ and $S_t = \pi(t, C_t)$ using DPP as \cref{eq:greedy,eq:cond_dpp,eq:seq_score,eq:reward}\;
                Compute $cur\_q = Q_{t-1, C_{t-1}} + \mathrm{rwd}^*$\;

                \If{$cur\_q > Q_{t, C_t}$}{
                    Update $Q_{t, C_t} \gets cur\_q$\;
                    Update $\T_{t, C_t} \gets \T_{t-1, C_{t-1}} \cup S_t$\;
                }
            }
        }
    }
    \Return $\T_{T, k}$\;
\end{algorithm}

\begin{table*}[!t]
\centering
\begin{tabular}{lcccccccc}
    \toprule
		\multirow{2}{*}{\textbf{Model}} & 
		\multirow{2}{*}{\makecell{\textbf{LLM} \\ \textbf{Size}}} & 
		\multirow{2}{*}{\textbf{\#Frames}} & 
		\multicolumn{4}{c}{\textbf{Video-MME$_{(wo/w\ subs)}$}} & 
		\multirow{2}{*}{\textbf{MLVU}} &
		\multirow{2}{*}{\textbf{LVB$_{val}$}} \\

		\cmidrule(lr){4-7} & & & Overall & Short & Medium & Long & & \\ 
		
		\multicolumn{2}{c}{\textit{Avg. Video Duration}} && \textit{17min} & \textit{1.3min} & \textit{9min} & \textit{41min} & \textit{12min} & \textit{12min} \\
  	\toprule
	Video-LLaVA     & 7B & 8 & 39.9 / 41.6 & 45.3 / 46.1 & 38.0 / 40.7 & 36.2 / 38.1 & 47.3 &   - \\
	Qwen-VL-Chat    & 7B & 8 & 41.1 / 41.9 & 46.9 / 47.3 & 38.7 / 40.4 & 37.8 / 37.9 &   -  &   -  \\
	VideoChat2      & 7B & 8 & 39.5 / 43.8 & 48.3 / 52.8 & 37.0 / 39.4 & 33.2 / 39.2 & 44.5 &   -  \\
	Chat-UniVi-V1.5 & 7B & 8 & 40.6 / 45.9 & 45.7 / 51.2 & 40.3 / 44.6 & 35.8 / 41.8 &   -  &   -  \\
	VideoLLaMA2     & 7B & 8 & 47.9 / \unk & 56.0 / \unk & 45.4 / \unk & 42.1 / \unk &   -  &   -  \\
	LLaVA-NeXT-QW2  & 7B & 8 & 49.5 / \unk & 58.0 / \unk & 47.0 / \unk & 43.4 / \unk &   -  &   -  \\
	\hdashline
	LongVILA        & 8B & 128 & 49.2 / \unk & 60.2 / \unk & 48.2 / \unk & 38.8 / \unk &   -  &   -  \\
	LongVA          & 7B & 128 & 52.6 / 54.3 & 61.1 / 61.6 & 50.4 / 53.6 & 46.2 / 47.6 &   -  &   -  \\ 
	Video-XL        & 7B & 128/256 & 55.5 / 61.0 & 64.0 / 67.4 & 53.2 / 60.7 & 49.2 / 54.9 & 64.9 &   -  \\
	LLaVA-OneVision$^*$ 
					& 7B & 32 & 58.2 / \unk & \unk / \unk & \unk / \unk & \unk / \unk & 64.7 & 56.3 \\
	\midrule
	VILA-V1.5       & 8B & 8 & 47.5 / 50.0 & 57.8 / 61.6 & 44.3	/ 46.2 & 40.3 / 42.1 & 46.3 & 47.1 \\
	\rule{0pt}{2ex} +\textsc{Frame-Voyager}\xspace
				    & 8B & 8 & 50.5 / 53.6 & 60.3 / 65.0 & 47.3 / 50.3 & 43.9 / 45.3 & 49.8 &   -  \\
	\rowcolor{gray!30}   
	\rule{0pt}{2ex} \textbf{+\mname}
	                & 8B & 8 & 53.3	/ 56.6 & 65.7 / 68.7 & 49.9 / 54.4 & 44.2 / 46.6 & 58.6 & 50.8 \\
	\hdashline
	MiniCPM-V2.6    & 7B & 8 & 52.6	/ 53.1 & 61.1 / 63.8 & 50.3	/ 50.2 & 46.4 / 45.4 & 55.4 & 51.2 \\
	\rowcolor{gray!30}
	\rule{0pt}{2ex} \textbf{+\mname} 
	                & 7B & 8 & 58.0 / 61.8 & 69.1 / 71.6 & 56.4 / 61.7 & 48.4 / 52.1 & 66.6 & 57.1 \\
	\hdashline
	LLaVA-OneVision & 7B & 8 & 53.6 / 53.9 & 64.7 / 65.0 & 51.4 / 52.0 & 44.7 / 44.6 & 59.3 & 54.2 \\
	\rule{0pt}{2ex} +\textsc{Frame-Voyager}\xspace
				    & 7B & 8 & 57.5 / \unk & 67.3 / \unk & 56.3 / \unk & 48.9 / \unk & 65.6 &   - \\
	\rowcolor{gray!30}	
	\rule{0pt}{2ex} \textbf{+\mname} 
	                & 7B & 8 & 59.6 / 59.1 & 71.8 / 71.1 & 57.6 / 57.6 & 49.6 / 48.8 & 69.8 & 59.0 \\
	\hdashline
	Ovis2 & 7B & 8 & 58.9 / 62.1 & 67.8 / 72.0 & 57.4 /60.3 & 51.3 / 54.0 & 60.9 & 56.9 \\
	\rowcolor{gray!30}
	\rule{0pt}{2ex} \textbf{+\mname} 
	                & 7B & 8 & \textbf{63.9 / 66.1} & \textbf{75.3 / 77.6} & \textbf{63.2 / 64.3} & \textbf{53.2 / 56.6} & \textbf{73.9} & \textbf{62.7} \\
	\bottomrule
	\end{tabular}
	\caption{
	Comparison of Video-LLMs with and without \mname{}.
	LVB$_{val}$ denotes the LongVideoBench validation set without subtitle interleaving.
	Video-XL uses 128/256 frames on Video-MME/MLVU.
	LLaVA-OneVision$^*$ refers to official report with well-tuned frames.
	}
	\label{tab:main_results}
\end{table*}

Dynamic programming can be used to update the \(Q\) score and identify the solution \(\hat{S}\) with an optimal allocation of \(k\), with a time complexity closely \(\mathcal{O}(nk^4)\).
In practice, to improve efficiency further, we adopt lazy dynamic programming strategies.
First, we directly maintain the \(Q^*\) score instead of updating \(Q\).
This strategy assumes that selecting a frame from the \(t\)-th segment depends only on the preceding state with optimal $k_{t-1}^*$, \ie, \((t-1, k_{t-1}^*, C_{t-1})\), corresponding to \(Q^*(t-1, C_{t-1})\), similar to the online MAP inference described in \cref{eq:map_inf}~\cite{gong2014diverse}.
Second, only the most recently selected frame is used as the condition in~\cref{eq:reward}, approximating an adaptive condition size and a fixed candidate segment size \(m\) for the \(t\)-th segment.
Consequently, with the lazy strategies, the state is simplified to \((t, C_t)\).
The policy function is denoted by \(S_t^*=\pi(t, C_t)\), and \(\T_{t, C_t}\) is an initially empty list \([\,]\) that tracks the selection trace \([S_1, S_2, \dots, S_t]\).
The corresponding reward simplifies to:
\begin{equation}
	\R_{S_t}^*(t-1,C_t-k_t)=\log (\mathcal{P}(S_t \mid \T_{t-1, C_{t-1}}[-1])), 	
\end{equation}
where $C_t-k_t = C_{t-1}$.
For brevity, we denote $Q^*_{t, C_t}=Q^*(t, C_t)$.
The dynamic programming update is:
\begin{align}\label{eq:dp_update}
	&Q^*_{t, C_t} = \max_{S_t \subseteq N_t} Q^*_{t-1, C_t-k_t} + \R_{S_t}^*(t-1,C_t-k_t)\,, \notag \\
	&\pi(t, C_t)  = \mathop{\mathrm{argmax}}_{S_t \subseteq N_t} 
					Q^*_{t-1, C_t-k_t} + \R_{S_t}^*(t-1,C_t-k_t)\,, \notag \\
	&\T_{t, C_t} = \T_{t, C_t - |S_t^*|} + [S_t^*],\text{where } S_t^* = \pi(t, C_t)\,.
\end{align}
The final solution is \(\hat{S} = \mathop{\mathrm{argmax}}_{S} Q^*(T, k) = \T(T, k) \).

The pseudocode for \mname{} is provided in \cref{alg}.
The end index \((t, C_t + k_t)\) is iterated, while the start index \((t-1, C_t)\) remains fixed.
The update functions are equivalent with \cref{eq:dp_update}.
This implementation is more efficient as the greedy DPP algorithm (line 9) can be computed incrementally in $\mathcal{O}(mk)$.
Thus, the worst-case time complexity of \cref{alg} is closely \(\mathcal{O}(nk^3)\) in practice~(proof in Appendix).
Compared to the standard DPP complexity of \(\mathcal{O}(nk^2)\), the additional factor \(k \ll n\) is negligible. 
Additionally, the iteration in line 6 is independent, enabling parallel updates.
This results in a time complexity of \(\mathcal{O}(nk^2)\), making its efficiency comparable to the standard DPP.

\section{Experiments}\label{sec:exp}

We integrate \mname{} as a plug-and-play process during inference with 7B SOTA Video-LLMs.
Following the setup in~\citet{yu2024frame}, we uniformly sample 128 candidate frames and apply \mname{} to select 8 frames as input to the Video-LLMs.
We conduct experiments using LMMs-Eval~\cite{lmms_eval2024} and VLMEvalKit~\cite{duan2024vlmevalkit} on three long video benchmarks.
Due to space limitations, additional experimental details and results are provided in the Appendix.
Code in \url{https://github.com/sunh-23/MDP3}.

\begin{figure*}[!t]
    \centering
    \begin{minipage}{0.64\textwidth}
     	
\setlength\tabcolsep{3.5pt}
\begin{tabular}{lcccccc}
    \toprule
		\multirow{2}{*}{\textbf{Model}} & 
		\multicolumn{5}{c}{\textbf{LongVideoBench$_{val}$}} \\

		\cmidrule(lr){2-6}
		\quad\ \textit{Video Duration} & 
			Overall  & 
			(8s, 15s] & 
			(15s, 1m] & 
			(3m, 10m] & 
			(15m, 60m] \\ 
  	\toprule
	VILA-V1.5       & 47.1\emptyfoot{+3.7} & 56.1\emptyfoot{+1.6} & 60.5\emptyfoot{+5.8} & 43.4\emptyfoot{+4.9} & 42.7\emptyfoot{+2.9} \\
	\rowcolor{gray!30}
	\rule{0pt}{2ex} \textbf{+\mname}
	                & 50.8\gainfoot{+3.7} & 57.7\gainfoot{+1.6} & 66.3\gainfoot{+5.8} & 48.3\gainfoot{+4.9} & 45.6\gainfoot{+2.9} \\
	\hdashline
	MiniCPM-V2.6    & 51.2\emptyfoot{+5.9} & 67.7\emptyfoot{-0.5} & 66.9\emptyfoot{+4.0} & 46.8\emptyfoot{+9.3} & 44.1\emptyfoot{+6.3} \\
	\rowcolor{gray!30}
	\rule{0pt}{2ex} \textbf{+\mname} 
	                & 57.1\gainfoot{+5.9} & 67.2\lossfoot{-0.5} & 70.9\gainfoot{+4.0} & 56.1\gainfoot{+9.3} & 50.4\gainfoot{+6.3} \\
	\hdashline
	LLaVA-OneVision & 54.2\emptyfoot{+4.8} & 68.3\emptyfoot{+2.1} & 66.9\emptyfoot{+6.4} & 52.4\emptyfoot{+5.4} & 46.8\emptyfoot{+5.0} \\
	\rowcolor{gray!30}
	\rule{0pt}{2ex} \textbf{+\mname}
	                & 59.0\gainfoot{+4.8} & 70.4\gainfoot{+2.1} & 73.3\gainfoot{+6.4} & 57.8\gainfoot{+5.4} & 51.8\gainfoot{+5.0} \\
	\bottomrule
	\end{tabular}

     	\captionof{table}{
		    Performance of \mname{} and its absence on LongVideoBench$_{val}$ for varying video durations across different baseline Video-LLMs.
		}
		\label{tab:lvbv}
    \end{minipage}\hfill
    \begin{minipage}{0.34\textwidth}
    	\centering
	   	\includegraphics[width=1.0\linewidth]{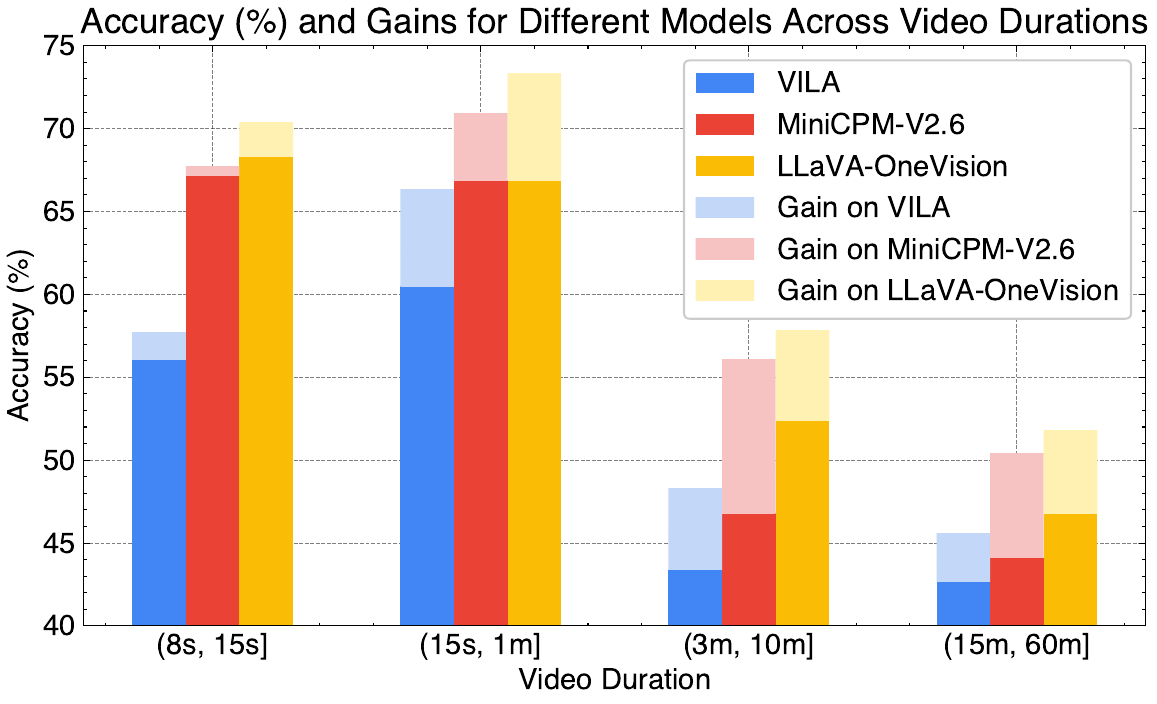}
		\captionof{figure}{
    		Performance of baselines and the improvement of \mname{} on LongVideoBench$_{val}$ for different video durations.
    		}
	\label{fig:lvbv}
    \end{minipage}
\end{figure*}

\subsection{Experiments Protocol}
\textbf{Backbone Models \& Primary Baselines:}
Based on the Video-MME leaderboard~\cite{fu2024video}, 
we choose the 7B SOTA Video-LLMs LLaVA-OneVision-7B~\cite{li2024llava}, MiniCPM-V2.6-7B~\cite{yao2024minicpm} and Ovis2-8B~\cite{ovis2} as our primary baselines.
To compare with the recent frame selection approach~\cite{yu2024frame}, tailor-made for VILA~\cite{lin2024vila}, we also include VILA-V1.5-7B as the primary baseline.
We integrate \mname{} with these baselines and evaluate it against other recent Video-LLMs.

\textbf{Benchmarks:}
We evaluate \mname{} on three long video benchmarks:
1)~\textbf{Video-MME}~\cite{fu2024video}, comprising 2700 human-annotated question-answer pairs, with an average video duration of 17 minutes. 
2)~\textbf{MLVU}~\cite{MLVU}, a comprehensive benchmark for multi-task long video understanding, consisting of 2593 tasks from 9 categories, with an average video duration of 12 minutes.
3)~\textbf{LongVideoBench}~\cite{wu2024longvideobench}, for which we use the validation set without subtitles, denoted as LVB$_{val}$, which contains 1337 question-answer pairs and has an average video length of 12 minutes.

\subsection{Results and Analysis}

\subsubsection{Comparison with SOTA Methods} 

\cref{tab:main_results} presents a comparison of \mname{} while applied to VILA-V1.5~\cite{lin2024vila},
MiniCPM-V2.6~\cite{yao2024minicpm}, LLaVA-OneVision~\cite{li2024llava} and Ovis2~\cite{ovis2}.
We compare \mname{} to recent Video-LLMs:
Video-LLaVA~\cite{lin2023video}, Qwen-VL-Chat~\cite{bai2023qwen}, VideoChat2~\cite{li2023videochat}, Chat-UniVi-V1.5~\cite{jin2024chat}, VideoLLaMA2~\cite{cheng2024videollama2}, LLaVA-NEXT-QW2~\cite{liu2024llavanext}, LongVILA~\cite{xue2024longvila}, LongVA~\cite{zhang2024longva}, and Video-XL~\cite{shu2024video}. 
The number of input frames for the LLMs is shown in \cref{tab:main_results}. 
Specifically, Video-XL processes 128 frames for Video-MME and 256 frames for MLVU, while LLaVA-OneVision$^*$ reports the official results using a well-tuned frame count.	
Compared to the baselines, \mname{} substantially improves the SOTA performance across all three evaluated benchmarks.
The main results presented in \cref{tab:main_results} highlight several key observations:
\begin{enumerate}
	\item Our frame selection method, \mname{}, empowers Ovis2 to achieve the highest overall performance across various benchmarks, confirming the effectiveness of \mname{}.
	\item When \mname{} is applied to the various primary Video-LLMs,
		it consistently outperforms uniform frame sampling.
		This validates that \mname{} is effective in frame selection and generalizes well across various Video-LLMs.
	\item Notably, LLaVA-OneVision with \mname{} selecting 8 frames outperforms LLaVA-OneVision$^*$, which uses a well-tuned frame count.
		This demonstrates that selecting key frames is more effective and efficient than simply increasing the number of frames, a finding supported by comparisons with LongVA, LongVILA, and Video-XL.
	\item Compared to \textsc{Frame-Voyager}, a custom-built method for VILA-V1.5 trained on extensive data, \mname{} shows superior performance. 
	This result validates that even a training-free, model-agnostic method can effectively achieve frame selection by reusing pretrained VLMs.
\end{enumerate}

\subsubsection{Results across Various Selection Sizes}\label{sssec:ravss}

\cref{fig:gain_on_videomme} presents the results of LLaVA-OneVision on Video-MME (without subtitles), comparing performance with and without applying \mname{} for key frame selection. 
There are four noteworthy insights:
\begin{enumerate}
	\item Regardless of whether \mname{} is applied, the law of diminishing marginal utility is evident as the number of frames fed into Video-LLMs increases.
		This shows that as the frame count rises, the visual understanding capacity of Video-LLMs quickly reaches its inherent upper bound, emphasizing the necessity for effective frame selection within limited capacity.
	\item When \mname{} is applied, the performance of LLaVA-OneVision consistently improves across various selection sizes, demonstrating the stability and generalizability of \mname{} under different selection size settings.
	\item The improvement of \mname{} is more pronounced when selecting fewer frames rather than more. 
		Notably, selecting 8 frames outperforms the best baseline of uniformly sampling 32 frames.
		Selecting just 2 frames (1/16 of the best baseline) achieves 90.8\% of the top performance, while 4 frames (1/8) reach 96.3\%.
		This shows that \mname{} effectively reduces the number of frames required for input, thereby enhancing inference efficiency.
	\item Notably, performance with 64 frames is lower than with 32 frames, whether or not \mname{} is used.
	This may be because LLaVA-OneVision was only trained on 32 frames, leading to potential performance drops when more frames are input.
	This indicates that current Video-LLMs often struggle to generalize beyond their training frame counts.
	Thus, effective frame selection methods like \mname{} are crucial for improving performance by focusing on frame selection during inference and avoiding the high costs of increasing frame counts during training.
\end{enumerate}

\subsubsection{Results across Various Video Durations}

We present the performance of baseline Video-LLMs and the improvements by \mname{} on LongVideoBench$_{val}$ for various video durations, as shown in \cref{tab:lvbv} and \cref{fig:lvbv}.
The results show that \mname{} substantially improves baseline performance for most video durations. 
However, for videos lasting 8 to 15 seconds, the improvements are smaller, and MiniCPM-V2.6 even shows a slight decline.
This is because uniformly sampling 8 frames in this range results in about 1 fps, making frame selection less impactful for very short videos.
This highlights that frame selection is more crucial for long videos than short ones. 
Indeed, for effectiveness and efficiency in video understanding, focusing on long videos provides a higher return on investment~(ROI).

\subsubsection{Ablation Study}

\begin{table}[t]
	\centering
	\setlength\tabcolsep{2.0pt}
	\begin{tabular}{lccccccc}
		\toprule
	                  & \textit{Qry.} & \textit{Div.} & \textit{Seq.} & \textit{HDS} & \textit{TFMA} & Acc. \\ 
		\midrule
	VILA-8B (+ Uniform)    & \nocheckmark  & \nocheckmark & \nocheckmark  & \nocheckmark & \checkmark & 47.5 \\
	\ + RGB Histogram      & \nocheckmark  & \nocheckmark & \nocheckmark  & \nocheckmark & \checkmark & 45.9 \\
	\ + Edges Change       & \nocheckmark  & \nocheckmark & \nocheckmark  & \nocheckmark & \checkmark & 47.3 \\
	\ + Optical Flow       & \nocheckmark  & \nocheckmark & \nocheckmark  & \nocheckmark & \checkmark & 46.7 \\
	\ + Katna              & \nocheckmark  & \nocheckmark & \nocheckmark  & \nocheckmark & \checkmark & 45.7 \\
	\ + \textsc{SeViLA}    
						  & \semicheckmark & \semicheckmark & \semicheckmark  & \nocheckmark & \nocheckmark &  49.0 \\
	\ + \textsc{Frame-Voyager}           
						   & \checkmark    & \semicheckmark  & \semicheckmark & \nocheckmark & \nocheckmark & 50.5 \\
		\hdashline
	\ + SigLIP 	           & \checkmark    & \nocheckmark & \nocheckmark  & \nocheckmark & \checkmark & 50.6 \\
	\ + \mname{} w. MGK    & \nocheckmark  & \checkmark   & \checkmark  & \checkmark & \checkmark & 48.9 \\
	\ + DPP w. CMGK  	   & \checkmark    & \checkmark   & \nocheckmark  & \checkmark & \checkmark & 51.8 \\
	\ + \mname{} w. cos. sim. & \checkmark & \checkmark   & \checkmark  & \nocheckmark & \checkmark & 50.2 \\
		\rowcolor{gray!30}
	\ \textbf{+ \mname}    & \checkmark    & \checkmark   & \checkmark    & \checkmark & \checkmark & \textbf{53.3} \\	
		\bottomrule
	\end{tabular}
	\caption{
		Results (\%) of various frame selection methods and ablation variants of \mname{} on Video-MME ($wo\ subs$), based on VILA. 
		Indicators: query relevance~(\textit{Qry}), list-wise diversity~(\textit{Div}), list-wise sequentiality~(\textit{Seq}), high-dimensional RKHS similarity (\textit{HDS}), and a tuning-free, model-agnostic design~(\textit{TFMA}).
	}
	\label{tab:abl}
\end{table}

\begin{figure}[t]
	\centering
	\includegraphics[width=0.8\linewidth]{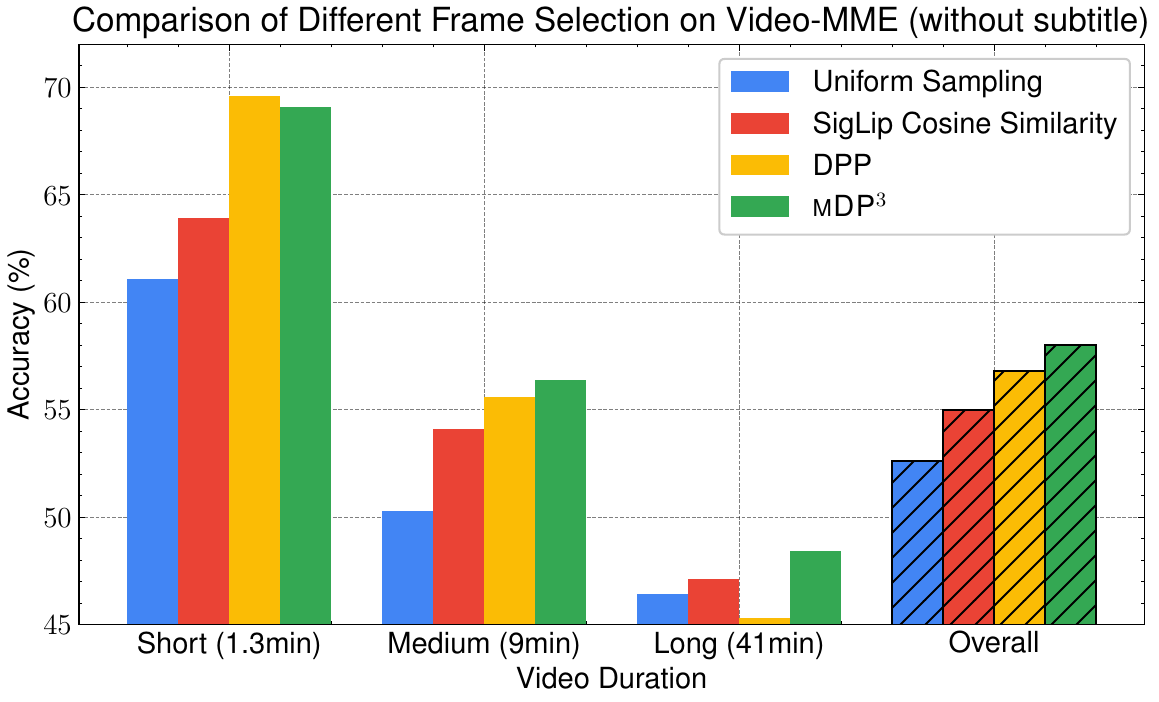}
	\caption{
	    Accuracies (\%) of various ablation variants of \mname{} applied with MiniCPM-V2.6 on Video-MME (without subtitles).
	}
   \label{fig:abl}
\end{figure}

We conducted an ablation study of \mname{}, with the results shown in \cref{tab:abl} and \cref{fig:abl}.
The study considered variations such as 
directly using SigLIP to select the top-$k$ similar frames based on the query, 
using standard DPP with CMGK while disregarding sequentiality,
and applying \mname{} with similarity matrix modifications such as MGK and cosine similarity.
We also compared \mname{} with traditional rule-based shot boundary detection methods,
such as RGB Histogram~\cite{sheena2015key}, Edges Change Ratio~\cite{nasreen2013key}, and Optical Flow~\cite{wolf1996key}, which select frames based on significant transitions between adjacent frames. 
These methods do not account for the current query or the list-wise relationship among selected frames.
The cluster-based method Katna~\cite{keplerlab_katna} only considers pair-wise diversity, overlooking query relevance, list-wise diversity, and sequentiality. 
\textsc{Frame-Voyager}~~\cite{yu2024frame} and \textsc{SeViLA}~~\cite{yu2023self} are recent end-to-end frame-selection methods designed for a specific model.
The \semicheckmark{} symbol indicates that some factors, while not explicitly addressed, may be implicitly captured.
Training on VILA's hidden states with a list-based loss enables \textsc{Frame-Voyager} to potentially capture list-wise information, while position embeddings may address sequentiality.

\cref{tab:abl} shows that the key principles of query relevance, list-wise diversity, and sequentiality are crucial for frame selection.
The complete \mname{} is the first to incorporate all these principles and achieves the best performance in the ablation study.
A comparison of the ablation variants of \mname{} further confirms that 
all components are effective and indispensable, including high-dimensional similarity in RKHS, list-wise diversity with DPP, and sequentiality with MDP.
\cref{fig:abl} further illustrates the ablation results across various video durations. 
While \mname{} outperforms other methods overall, it performs slightly worse than DPP for short videos.
Standard DPP accounts for list-wise diversity but lacks sequentiality.
However, the performance of standard DPP deteriorates significantly with longer videos.
These observations reveal two conclusions:
\begin{enumerate}
    \item Sequentiality is more critical for long videos than for short ones, and it should be prioritized when selecting frames from longer videos.
    \item Standard DPP cannot handle sequentiality, which becomes a drawback as video length increases.  Therefore, sequentiality is essential for frame selection and \mname{}.
\end{enumerate}

\section{Conclusion}\label{sec:conclusion}

This paper emphasizes that good frame selection should consider query relevance, list-wise diversity, and sequentiality.
We propose \mname{}, a training-free and model-agnostic method that fully addresses these principles for the first time.
Experiments on three long video benchmarks show that integrating \mname{} with existing Video-LLMs significantly improves performance, demonstrating its effectiveness and robustness.
\mname{} also provides a $(1-1/e)$-approximation for the NP-hard list-wise frame selection and selection size allocation problem in sequential video segments.
This method has a practical time complexity of \(\mathcal{O}(nk^4)\), reducible to \(\mathcal{O}(nk^2)\) through lazy and parallel updates.
Future work could leverage training-free \mname{} to perform test-time computation scaling up, generating self-distilled data for iterative Video-LMMs refinement.

\section*{Acknowledgements}
This paper is supported by NSFC~(62076121)
and Major Program~(JD) of Hubei Province~(2023BAA024).
The authors would like to thank Hao-Yuan He for his helpful feedback on drafts of the paper.

{
    \small
    \bibliographystyle{ieeenat_fullname}
    \bibliography{main}
}
\clearpage
\appendix
\setcounter{page}{1}
\maketitlesupplementary

\section*{Appendix Contents}

\startcontents[appendices]
\printcontents[appendices]{}{1}{}

\newpage
\section{Proofs}

\subsection{Preliminaries}

\begin{definition}[Submodular Maximization~\cite{lovasz1983submodular}]\label{thm:sub_def}
	Let $\Omega$ denote a finite set, and let $f: 2^{\Omega} \rightarrow \mathbb{R}_{\geq 0}$ be a set function, where $2^{\Omega}$ is the power set of $\Omega$. The function $f$ is called submodular if it satisfies one of the following three equivalent conditions:
	\begin{itemize}
		\item For every $X, Y \subseteq \Omega$ with $X \subseteq Y$, and for all $x \in \Omega \setminus Y$, we have 
			\begin{equation}
				f(X \cup \{x\}) - f(X) \geq f(Y \cup \{x\}) - f(Y) \,,
			\end{equation}
		\item For every $X, Y \subseteq \Omega$, we have 
			\begin{equation}\label{thm:sub_def_cond2}
				f(X) + f(Y) \geq f(X \cup Y) + f(X \cap Y) \,,
			\end{equation}
		\item For every $X \subseteq \Omega$ and $x_1, x_2 \in \Omega \setminus X$ such that $x_1 \neq x_2$, we have 
			\begin{equation}
				f(X \cup \{x_1\}) + f(X \cup \{x_2\}) \geq f(X \cup \{x_1, x_2\}) + f(X) \,.
			\end{equation}
	\end{itemize}
\end{definition}
These three conditions are equivalent, and the first condition is the most commonly used, as it directly reflects the law of diminishing marginal utility as the number of items increases.

\begin{definition}[Sub-additive]
	A set function $f: 2^{\Omega} \rightarrow \mathbb{R}_{\geq 0}$ is sub-additive if for every two sets $X, Y \in \Omega$, we have \
	\begin{equation}
		f(X\cup Y)\leq f(X) + f(Y)\,.
	\end{equation}
\end{definition}

\begin{lemma}\label{thm:sub_additive}
	A non-negative submodular set function $f: 2^{\Omega}\rightarrow\mathbb{R}_{\geq 0}$ is sub-additive.
	\begin{proof}
		As the second condition \cref{thm:sub_def_cond2} in \cref{thm:sub_def}. 
		for $X,Y\subseteq \Omega$, we have $f(X) + f(Y) \geq f(X \cup Y) + f(X \cap Y)$. So, $f(X) + f(Y) \geq f(X \cup Y)$ as $f(X \cap Y) \geq 0$.
	\end{proof}
\end{lemma}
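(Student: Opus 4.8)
The plan is to obtain sub-additivity as an immediate corollary of submodularity in its symmetric form together with the non-negativity hypothesis. \cref{thm:sub_def} records three equivalent characterizations of submodularity, so I am free to start from the second one: for all $X, Y \subseteq \Omega$ we have $f(X) + f(Y) \geq f(X \cup Y) + f(X \cap Y)$, i.e.\ condition~\cref{thm:sub_def_cond2}.

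The only other ingredient is that $f$ takes values in $\mathbb{R}_{\geq 0}$. Applying this to the set $X \cap Y$ gives $f(X \cap Y) \geq 0$, so discarding that term from the right-hand side can only loosen the bound:
\begin{equation}
    f(X) + f(Y) \geq f(X \cup Y) + f(X \cap Y) \geq f(X \cup Y)\,.
\end{equation}
Since $X, Y$ were arbitrary, this is precisely the defining inequality of sub-additivity, which completes the argument.

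I do not expect any genuine obstacle here: the proof is a one-line deduction from \cref{thm:sub_def} and non-negativity. The single point worth flagging is that we invoke the \emph{symmetric} form of submodularity rather than the diminishing-returns form; this is legitimate because \cref{thm:sub_def} asserts the equivalence of its three conditions. Should one wish to avoid relying on that equivalence, one could instead derive \cref{thm:sub_def_cond2} from the marginal-gain condition by peeling off the elements of $Y \setminus X$ one at a time and telescoping the resulting inequalities, but given the statement of \cref{thm:sub_def} this detour is unnecessary.
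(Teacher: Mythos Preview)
Your proposal is correct and follows exactly the same approach as the paper: invoke the symmetric form \cref{thm:sub_def_cond2} of submodularity and drop the non-negative term $f(X\cap Y)$ from the right-hand side. The additional remarks you include about deriving \cref{thm:sub_def_cond2} from the diminishing-returns form are sound but, as you note, unnecessary given \cref{thm:sub_def}.
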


\begin{lemma}\label{thm:fs_submodular}
	Let $f: 2^{\Omega}\rightarrow \mathbb{R}$ be submodular. 
	Let $S\subseteq \Omega$, and $f_{S}(X)=f(S\cup X)-f(S)$ for every $X\subseteq\Omega$. ($f_{S}$ is the marginal value function for set $S$.) 
	Then $f_S$ is also submodular.
	\begin{proof}
		Let $X, Y\subseteq\Omega \setminus S$; it suffices to consider ground set $\Omega\setminus S$.
		\begin{align}
			&\left(f_S(X\cup Y)+f_S(X\cap Y)\right)-\left(f_S(X)-f_S(Y)\right) \notag\\
			&\quad=f(S\cup X\cup Y)-f(S)+f(S\cup(X\cap Y))-f(S) \notag\\
			&\qquad-\left(f(S\cup X)-f(S)+f(S\cup Y)-f(S)\right) \\
			&\quad = f(S\cup X\cup Y)+f(S\cup(X\cap Y)) \notag\\
			&\qquad-f(S\cup X)-f(S\cup Y) \notag\\
			&\quad \leq 0 \,. \notag
		\end{align}
		The last inequality is by $S\cup X\cup Y=(S\cup X)\cup(S\cup Y)$, $S\cup(X\cap Y)=(S\cup X)\cap(S\cup Y)$ and submodularity of $f$.
		Therefore, $f_S$ is also submodular is proved.
	\end{proof}
\end{lemma}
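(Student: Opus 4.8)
The plan is to establish submodularity of $f_S$ using the second of the three equivalent conditions in \cref{thm:sub_def}: I will show that $f_S(X)+f_S(Y)\ge f_S(X\cup Y)+f_S(X\cap Y)$ for all $X,Y\subseteq\Omega$.

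First I would reduce to the case $X,Y\subseteq\Omega\setminus S$. Since $f_S(Z)=f(S\cup Z)-f(S)$ depends on $Z$ only through $S\cup Z$, and $S\cup Z=S\cup(Z\setminus S)$, replacing $X,Y$ by $X\setminus S,\,Y\setminus S$ leaves all four quantities $f_S(X),\,f_S(Y),\,f_S(X\cup Y),\,f_S(X\cap Y)$ unchanged (using $(X\cap Y)\setminus S=(X\setminus S)\cap(Y\setminus S)$ and $(X\cup Y)\setminus S=(X\setminus S)\cup(Y\setminus S)$), so it suffices to verify the inequality on the ground set $\Omega\setminus S$.

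Next, for $X,Y\subseteq\Omega\setminus S$ I would expand each term by definition. The two $-f(S)$ terms on each side cancel, so the claim becomes
\[
 f(S\cup X)+f(S\cup Y)\ \ge\ f(S\cup X\cup Y)+f\big(S\cup(X\cap Y)\big).
\]
The crucial observation is the pair of set identities $S\cup X\cup Y=(S\cup X)\cup(S\cup Y)$ and $S\cup(X\cap Y)=(S\cup X)\cap(S\cup Y)$ (the latter being the distributive law of union over intersection). Setting $A:=S\cup X$ and $B:=S\cup Y$, the right-hand side is $f(A\cup B)+f(A\cap B)$ and the left-hand side is $f(A)+f(B)$, so the inequality is precisely the submodularity of $f$ (\cref{thm:sub_def}, second condition) applied to $A$ and $B$. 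This yields the submodularity of $f_S$.

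There is no serious obstacle here; the only points requiring care are the reduction to subsets of $\Omega\setminus S$ and the identity $S\cup(X\cap Y)=(S\cup X)\cap(S\cup Y)$. A slightly shorter alternative, which I might present instead, works directly from the marginal-gain condition: for $X\subseteq Y\subseteq\Omega\setminus S$ and $x\in\Omega\setminus(Y\cup S)$ one has $f_S(X\cup\{x\})-f_S(X)=f(S\cup X\cup\{x\})-f(S\cup X)\ge f(S\cup Y\cup\{x\})-f(S\cup Y)=f_S(Y\cup\{x\})-f_S(Y)$, applying submodularity of $f$ to the chain $S\cup X\subseteq S\cup Y$ with the element $x\notin S\cup Y$.
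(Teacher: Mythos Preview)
Your proposal is correct and follows essentially the same route as the paper: reduce to $X,Y\subseteq\Omega\setminus S$, expand $f_S$, and apply the submodularity inequality of $f$ to $A=S\cup X$ and $B=S\cup Y$ via the identities $A\cup B=S\cup X\cup Y$ and $A\cap B=S\cup(X\cap Y)$. Your write-up is in fact a bit more careful than the paper's (you justify the reduction explicitly and offer the diminishing-marginals alternative), so no changes are needed.
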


\subsection{Proof of $(1-1/e)$-approximation}

Submodular maximization is NP-hard in general.
Therefore, most research in this field focuses on approximation algorithms with polynomial-time complexity. 
While the submodular function is monotone, \ie, for every $X, Y \subseteq \Omega$, we have $f(X) \leq f(Y)$. 
The problem of maximizing a monotone submodular function subject to a cardinality constraint admits a $(1 - 1/e)$-approximation greedy algorithm~(as introduced  in \cref{subsec:method_dpp})~\cite{nemhauser1978analysis}.

In this section, we provide a concise proof of the $1 - 1/e$ approximation ratio for the greedy algorithm.
\begin{theorem}[$(1 - \frac{1}{e})$-approximation of Greedy Algo.]
    There exists a greedy algorithm for the submodular maximization problem, which starts with an empty set \( S = \emptyset \) and iteratively selects the item that maximizes the marginal gain:
    \begin{equation}\label{eq:greed_sub}
        j = \mathop{\mathrm{argmax}}\limits_{i \in \Omega \setminus S} \quad f(S \cup \{i\}) - f(S) \,.
    \end{equation}
    The algorithm continues until the selected set \( S \) reaches the cardinality limit \( k \).
    
    This greedy algorithm provides a solution \( \hat{S} \), which guarantees a $(1 - 1/e)$ approximation, where the optimal solution is denoted as \( S^* \):
    \begin{equation}
        f(\hat{S}) \leq f(S^*)
    \end{equation}
\end{theorem}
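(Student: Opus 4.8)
The plan is to give the classical Nemhauser–Wolsey–Fisher greedy analysis, specialized to the monotone submodular objective coming from the DPP log-probability, using only the facts already established above (Definition~\ref{thm:sub_def}, Lemma~\ref{thm:sub_additive}, Lemma~\ref{thm:fs_submodular}). Write $S_0=\emptyset$ and let $S_i$ denote the greedy set after $i$ iterations, so that $\hat S = S_k$; let $S^*\subseteq\Omega$ with $|S^*|=k$ be an optimal solution. The goal is the bound $f(\hat S)\ge(1-1/e)\,f(S^*)$, which I read as the intended content of the inequality printed in the statement. The proof proceeds in two conceptual moves: a per-iteration progress lemma, then an unrolling of the resulting recursion.

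First I would establish the per-iteration inequality: for every $i\in\{0,\dots,k-1\}$,
\begin{equation}
    f(S_{i+1})-f(S_i)\ \ge\ \tfrac{1}{k}\bigl(f(S^*)-f(S_i)\bigr)\,.
\end{equation}
To prove it, enumerate $S^*\setminus S_i=\{x_1,\dots,x_r\}$ with $r\le k$ and telescope the joint gain:
\begin{equation}
    f(S_i\cup S^*)-f(S_i)\ =\ \sum_{j=1}^{r}\Bigl[f\bigl(S_i\cup\{x_1,\dots,x_j\}\bigr)-f\bigl(S_i\cup\{x_1,\dots,x_{j-1}\}\bigr)\Bigr]\,.
\end{equation}
By submodularity in its marginal-returns form (the first condition of Definition~\ref{thm:sub_def}, equivalently Lemma~\ref{thm:fs_submodular} applied to the marginal-value functions), each bracketed term is at most $f(S_i\cup\{x_j\})-f(S_i)$, and each of these is at most the greedy increment $f(S_{i+1})-f(S_i)$ because the greedy rule~\eqref{eq:greed_sub} picks an element of maximal marginal gain over all of $\Omega\setminus S_i\supseteq S^*\setminus S_i$. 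Summing over $j$ gives $f(S_i\cup S^*)-f(S_i)\le k\bigl(f(S_{i+1})-f(S_i)\bigr)$; then monotonicity yields $f(S_i\cup S^*)\ge f(S^*)$, and combining gives the displayed inequality.

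Second, I would turn this into a recursion for the optimality gap $\delta_i:=f(S^*)-f(S_i)$, namely $\delta_{i+1}\le(1-1/k)\,\delta_i$, so that $\delta_k\le(1-1/k)^k\delta_0$. Since $f(\emptyset)\ge0$ by non-negativity we have $\delta_0\le f(S^*)$, and $(1-1/k)^k\le e^{-1}$, hence $\delta_k\le e^{-1}f(S^*)$, i.e.\ $f(\hat S)=f(S^*)-\delta_k\ge(1-1/e)\,f(S^*)$, which is the stated guarantee.

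The only non-routine point—and the one I expect to be the main obstacle—is the per-iteration lemma, specifically its reliance on \emph{monotonicity} to pass from $f(S_i\cup S^*)-f(S_i)$ to $f(S^*)-f(S_i)$: without monotonicity the greedy run only makes provable progress toward $f(S_i\cup S^*)$, not toward the optimum, and one is left with the weaker $1/2$-type bound for non-monotone submodular maximization. This is a genuine caveat here because the raw DPP objective $f(S)=\log\det(\tilde{\mL}_S)$ (and likewise $\det(\tilde{\mL}_S)$) is submodular but not monotone in general—adding a frame can shrink the RKHS log-volume when the corresponding Gram block has eigenvalues below one. So in the write-up I would make the monotonicity hypothesis explicit and justify it in our setting, either by working with a monotone surrogate of the normalized $\log(\det(\cdot)+1)$ type or by restricting to the kernel-bandwidth/scaling regime of~\eqref{eq:log_dpp} in which the singleton marginal log-volumes stay non-negative, rather than glossing over it; everything else in the argument is mechanical telescoping and an elementary exponential inequality.
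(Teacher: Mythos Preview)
Your proof is correct and follows essentially the same classical Nemhauser--Wolsey--Fisher argument as the paper: both establish the per-iteration inequality $f(S_{i+1})-f(S_i)\ge\tfrac{1}{k}(f(S^*)-f(S_i))$ and then unroll the resulting recursion on the optimality gap. The only cosmetic difference is that the paper obtains the per-iteration bound by invoking the sub-additivity of $f_{S_i}$ (Lemmas~\ref{thm:sub_additive} and~\ref{thm:fs_submodular}) together with an averaging/existence step, whereas you reach the same inequality via an explicit telescoping sum; your explicit flagging of the monotonicity hypothesis is a point the paper uses implicitly at the same place.
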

\begin{proof}
    According to \cref{thm:fs_submodular}, \(f_S\) is submodular, and by \cref{thm:sub_additive}, it is also sub-additive. Therefore, we have:
    \begin{equation}
        f_S(S^*) \leq \sum_{x \in S^*} f_S(x)\,,
    \end{equation}
    which implies that:
    \begin{equation}
        \exists x \in S^*, \quad f_S(x) \geq \frac{1}{k} f_S(S^*) \,.
    \end{equation}
    For this \(x\), we have the following margin lower bound:
    \begin{equation}\label{eq:margin_lowerbound}
        f(S \cup \{x\}) - f(S) \geq \frac{f(S^*) - f(S)}{k}\,.
    \end{equation}
    
    Let \( S_t \) denote the selected subset after the \(t\)-th step of the greedy algorithm. According to \cref{eq:greed_sub}, in the greedy algorithm, we have:
    \begin{equation}
        f(S_{t+1}) - f(S_t) \geq f(S_t \cup \{x\}) - f(S_t), \quad \forall x \in \Omega \setminus S_t\,.
    \end{equation}
    Therefore, in the greedy algorithm, the marginal gain is lower-bounded by:
    \begin{equation}\begin{aligned}\label{eq:margin_gain_lowerbound}
        f(S_{t+1}) - f(S_t) &\geq \frac{f(S^*) - f(S_t)}{k}\,.
    \end{aligned}\end{equation}
    This implies:
    \begin{equation}
        f(S^*) - f(S_{t+1}) \leq \left(1 - \frac{1}{k}\right)(f(S^*) - f(S_t))\,.
    \end{equation}
    Hence, when the greedy algorithm selects a subset \( \hat{S} = S_k \) after \(k\) steps, we have:
    \begin{equation}\begin{aligned}
        f(S^*) - f(S_k) &\leq \left(1 - \frac{1}{k}\right)(f(S^*) - f(S_{k-1})) \\
        &\leq \left(1 - \frac{1}{k}\right)^2(f(S^*) - f(S_{k-2})) \\
        & \qquad \vdots \\
        &\leq \left(1 - \frac{1}{k}\right)^k(f(S^*) - f(S_0))\,,
    \end{aligned}\end{equation}
    where \( S_0 \) is the initial set at \( t = 0 \), with \( S_0 = \emptyset \), such that \( f(S_0) = 0 \).
    Therefore, we obtain:
    \begin{equation}
        f(S^*) - f(\hat{S}) \leq \left(1 - \frac{1}{k}\right)^k f(S^*)\,.
    \end{equation}
    Hence, we have:
    \begin{equation}\begin{aligned}\label{eq:detailed_bound}
        f(\hat{S}) 
            &\geq \left( 1 - \left(1 - \frac{1}{k}\right)^k \right) f(S^*) \\
            &\geq \lim_{k \to +\infty} \left( 1 - \left(1 - \frac{1}{k}\right)^k \right) f(S^*) \\
            & = \left(1 - \frac{1}{e}\right) f(S^*)\,.
    \end{aligned}\end{equation}
    The proof is complete.
\end{proof}

Previous works~\cite{fujishige2005submodular,krause2014submodular} have established that the determinantal point process~(DPP) is a monotone submodular function.
Therefore, when selecting 8 frames using the standard DPP, the approximation ratio is at least \mbox{\( 1 - \left( 1 - \frac{1}{8} \right)^8 = 65.6\% \).}

\subsection{Proof of Time Complexity}
The time complexity of \cref{alg} is closely $\mathcal{O}(nk^3)$, where $n$ represents the number of candidate frames and $k$ denotes the selection size.

\begin{proof}
    As indicated by the pseudocode in \cref{alg}, there are three loops, with the iteration sizes specified as follows:
    \begin{itemize}
        \item Line 3: $\lceil \frac{n}{m} \rceil$;
        \item Line 6: $k + 1$;
        \item Line 7: $k_{t-1} \leq \min(m, k, k - C_{t-1}) \leq \min(m, k)$.
    \end{itemize}
    Here, $m$ denotes the segment size. 
    The DPP update in each iteration (Line 9) has a time complexity of $\mathcal{O}(mk_{t-1})$, utilizing Cholesky decomposition for incremental computation.

    Therefore, the overall time complexity can thus be expressed as:
    \begin{equation}
        \mathcal{O}\left(
            \frac{n}{m} \cdot (k \cdot mk_{t-1}^2 ) 
        \right) = \mathcal{O}\left(
            n \cdot \min(m^2k, k^3)  
        \right)
    \end{equation}
    Additionally, the time complexity of computing the determinant in line 5 is $\mathcal{O}(m^3)$, resulting in the following time complexity:
    \begin{equation}
        \mathcal{O}\left(
            \frac{n}{m} \cdot m^3
        \right) = \mathcal{O}\left(
            n m^2
        \right)
    \end{equation}
    Next, we analyze the relationship between $m$ and $k$, leading to the following total time complexity:
    \begin{equation}
        \left\{
            \begin{array}{ll}
                \mathcal{O}(nm^2),  & \text{if } k < m^{2/3} \\
                \mathcal{O}(nk^3),  & \text{if } m^{2/3} \leq k < m  \\
                \mathcal{O}(nm^2k) < \mathcal{O}(nk^3), & \text{if } k \geq m \\
            \end{array}
        \right.
    \end{equation}
    In practice, the segment size $m$ is typically small, so in most practical cases, the time complexity of \cref{alg} is upper-bounded by $\mathcal{O}(nk^3)$.
\end{proof}
This algorithm exhibits pseudo-polynomial time complexity, analogous to the knapsack problem. 
In the dynamic programming approach proposed in this paper, the runtime of the pseudo-polynomial complexity is practically comparable to that of a polynomial-time algorithm.

In practical applications, the segment size $m$ and selection size $k$ are both much smaller than the total number of video frames $n$, \ie, $m \ll n$ and $k \ll n$. 
Thus, regardless of the relationship between $m$ and $k$, the total time complexity is much smaller than feeding all frames into transformer-based LLMs, which have a time complexity of $O((n \cdot \text{\#tokens\_per\_image})^2)$ per layer and attention head.

Additionally, the iteration in line 6 of \cref{alg} is independent, enabling parallel updates. 
This results in a time complexity of \(\mathcal{O}(nk^2)\), making its efficiency comparable to that of the standard DPP. 
Moreover, without lazy strategies and parallel updates, the time complexity is closely \(\mathcal{O}(nk^4)\) in most practical cases, and the proof is similar.

\newpage
\section{Experiments Details and More Discussions}

\subsection{Experiments Compute Resoureces}
We integrate \mname{} as a plug-and-play process during inference with 7B SOTA Video-LLMs.
We conduct experiments using LMMs-Eval~\cite{lmms_eval2024} and VLMEvalKit~\cite{duan2024vlmevalkit} on three long video benchmarks. 
All experiments are run on NVIDIA A100-PCIE-40GB or NVIDIA A100-PCIE-80GB GPUs, using 256 AMD EPYC 7H12 64-Core @ 2.600GHz CPUs, with Ubuntu 20.04.6 as the operating system, adhering to a rigorous experimental protocol to ensure fair comparisons among compared methods.

\subsection{Dataset Details}

\paragraph{Video-MME~\cite{fu2024video}:}  
Video Multi-Modal Evaluation~(Video-MME) is a dataset designed to enhance video understanding for Multimodal Large Language Models (MLLMs). It consists of 900 videos spanning 6 visual domains, with durations ranging from 11 seconds to 1 hour, capturing a variety of contextual dynamics.
All videos are manually annotated by experts, generating 2700 question-answer pairs, ensuring high-quality and reliable data for model evaluation.
Experiments on Video-MME will be conducted both with and without subtitles to assess the impact of multi-modal inputs.

\paragraph{MLVU~\cite{MLVU}:}
Multi-task Long Video Understanding Benchmark~(MLVU) is a new dataset designed to evaluate Long Video Understanding (LVU) performance.
It addresses the limitations of existing benchmarks by offering longer video durations, diverse video genres (such as movies, surveillance footage, and cartoons), and a range of evaluation tasks. 
The benchmark includes 2593 tasks across 9 categories, with an average video duration of 12 minutes, providing a comprehensive assessment of MLLMs' capabilities in understanding long videos.
This allows for a more comprehensive assessment of MLLMs' capabilities in understanding long videos.

\paragraph{LongVideoBench~\cite{wu2024longvideobench}:}  
It is a recent benchmark designed to evaluate long-term video-language understanding for MLLMs.
It consists of 3763 web-collected videos of varying lengths, up to one hour, with subtitles, covering a wide range of themes.
The dataset is tailored to assess models' ability to process and reason over detailed multimodal information from long video inputs.
It includes 6678 human-annotated multiple-choice questions across 17 fine-grained categories, making it one of the most comprehensive benchmarks for long-form video understanding.
In this paper, we focus on the validation set without subtitles, denoted as LVB$_{val}$, which contains 1337 question-answer pairs and has an average video length of 12 minutes.

\subsection{Implementation Details}

\paragraph{Visual Encoder:}
The primary baselines, LLaVA-OneVision-7B~\cite{li2024llava} and MiniCPM-V2.6-7B~\cite{yao2024minicpm}, both use SigLIP~\cite{zhai2023sigmoid} as the visual encoder,
which we also adopt as the VLM in~\cref{eq:emb} to avoid including excessive additional parameters. 
Additionally, since the context length of the text encoder in SigLIP is 64, potentially insufficient for the entire question, 
we split the text sequence into multiple sequences of equal length, each no longer than 64 tokens.
We then extract multiple text embeddings and aggregate them into a final text embedding using pooling.

\paragraph{Multi-kernel:}
\cref{eq:k_family} presents a principled approach for selecting the optimal kernel.
Factors exist to combine positive semi-definite (PSD) kernels.
For implementation, we use the Gaussian kernel as the base PSD kernel, defined by 
\(k(x,y) = \exp\left(-\frac{\| (x-y)/h \|^2}{2\sigma^2}\right)\). 
Consequently, the base kernel with a combination factor can be expressed as:
\begin{equation}
    \beta_u \cdot k_u(x,y) = \beta_u \cdot \exp\left(-\frac{\| x-y \|^2}{2(h_u\sigma_u)^2}\right) \,.
\end{equation}
We denote $\alpha_u = (h_u\sigma_u)^2$ as a single hyperparameter.
Following the multi-kernel maximum mean discrepancy~(MK-MMD) framework~\cite{long2015learning}, the optimal $\beta_u$ can be optimized using a quadratic program~(QP).
However, optimizing $\beta_u$ is orthogonal of this work.
In the recent official implementation of \citet{long2015learning}, average weights $\beta_u=1/U$ were employed, yielding good performance. 
Hence, we adopt average weights for $\beta_u$ and concentrate on configuring $\alpha_u$. 
Consistent with \citet{long2015learning,sun2023enhancing}, for both $g,k \in \mathcal{K}$, we set $\alpha_u$ to $2^i$ where $i \in \{-3, -2, 0, 1, 2\}$ and use an averaged ensemble of multiple Gaussian kernels. 
Besides, the difference between $g$ and $k$ is modulated by $\lambda$.

\paragraph{Hyper-parameters:}
We set the trade-off hyperparameter $\lambda = 0.2$ and the segment size $m = 32$ for all tasks and benchmarks in \mname{}, determined through cross-validation on a subset of the LLaVA-OneVision-mid~\cite{li2024llava} training set.

\subsection{Hyperparameter Sensitivity Analysis}
\begin{figure}[t]
	\centering
	\includegraphics[width=0.8\linewidth]{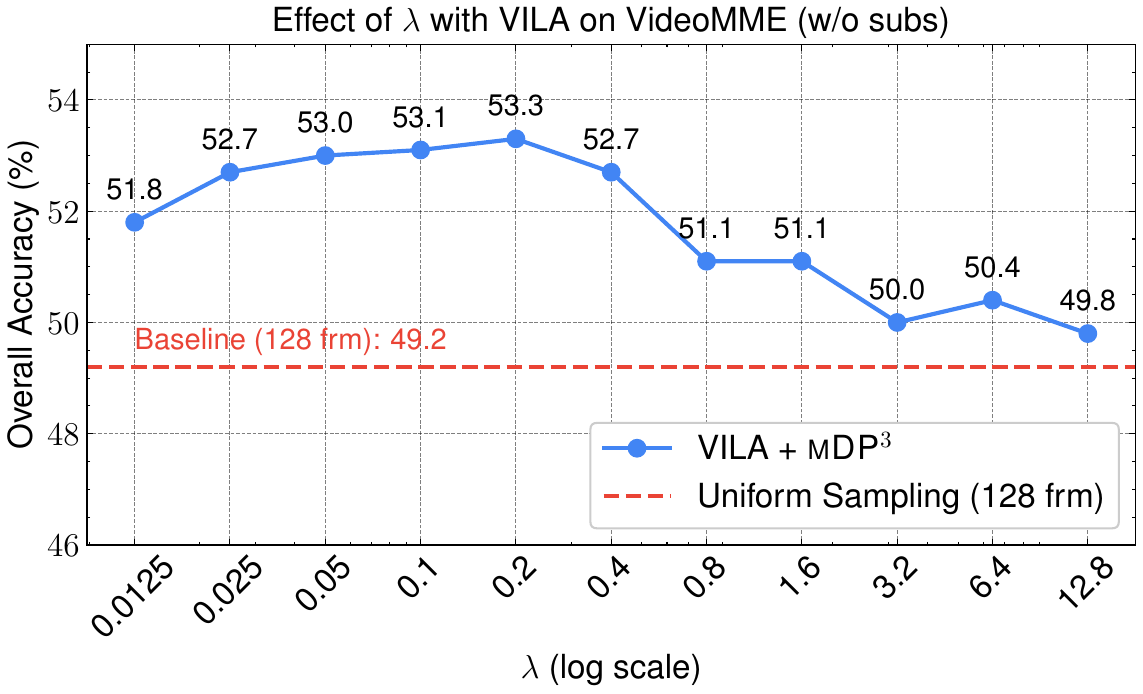}
	\caption{\label{fig:hyper_sen_lambda}
 	   Hyper-parameter sensitivity with trade-off $\lambda$.
	} 
\end{figure}

\begin{figure}[t]
	\centering
	\includegraphics[width=0.8\linewidth]{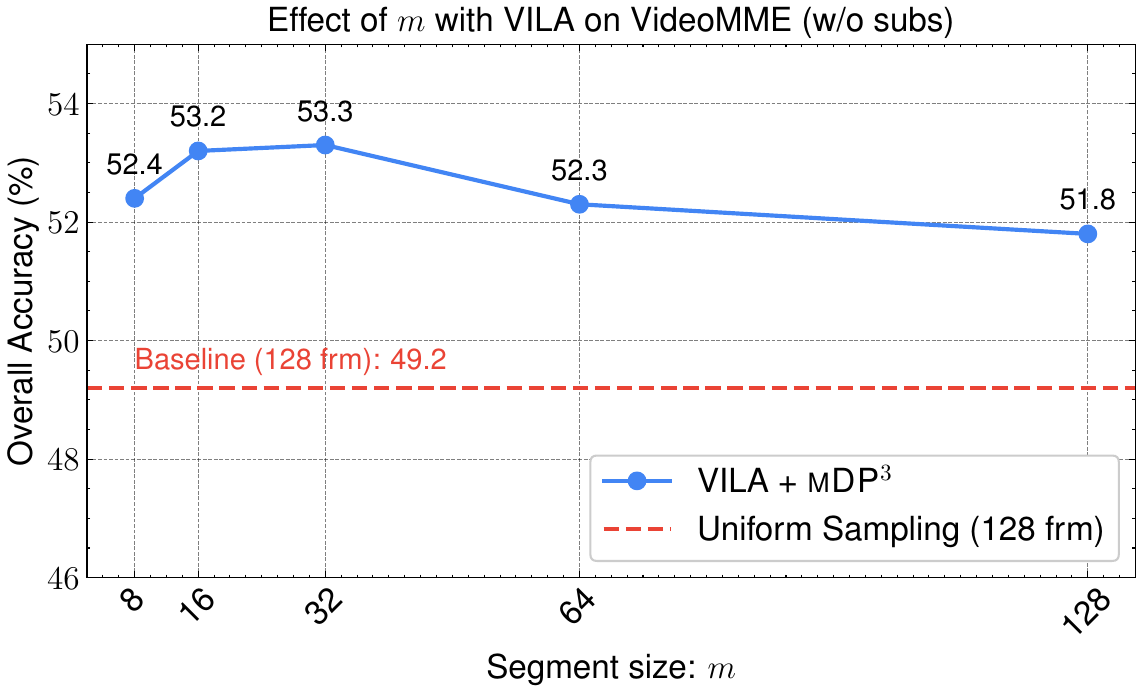}
	\caption{\label{fig:hyper_sen_m}
		Hyper-parameter sensitivity with segment size $m$.
	}
\end{figure}

To evaluate the robustness of \mname{}, we perform a sensitivity analysis of the score-trade-off parameter $\lambda$ and the segment size $m$ using the VideoMME benchmark (without subtitles). The results are shown in \cref{fig:hyper_sen_lambda,fig:hyper_sen_m}.

\paragraph{Score-trade-off ($\boldsymbol{\lambda}$).}
\Cref{fig:hyper_sen_lambda} varies $\lambda$ logarithmically from 0.0125 to 12.8. Accuracy stays within a narrow range of 51–53 \%, peaking at 53.3 \% when $\lambda = 0.2$; every value surpasses the baseline of 49.2 \%.

\paragraph{Segment size ($\boldsymbol{m}$).}
\Cref{fig:hyper_sen_m} evaluates segment sizes ranging from 8 to 128 frames. All configurations exceed the baseline. The best accuracy, 53.3 \%, is achieved at $m = 32$; accuracy stays above 52 \% for $m \le 64$ and remains competitive (51.8 \%) even at $m = 128$. These findings indicate that \mname{} is insensitive to the choice of $m$.

Across all examined values of $\lambda$ and $m$, \mname{} consistently outperforms uniform sampling, confirming its robustness and practical ease of tuning.

\subsection{Additional Parameters Analysis}

\begin{table*}[!t]
	\centering
	\begin{tabular}{lccccc}
		\toprule
			MLLM & Used LLM & Used VLM & MLLM Params & Additional Params & Increase \\
		\midrule
			VILA-V1.5-8B       & LLama3-8B & SigLIP-400M & 8.494B & 0.450B & 5.298\% \\
			MiniCPM-V2.6-7B    & Qwen2-7B  & SigLIP-400M & 8.099B & 0.450B & 5.556\% \\
			LLaVA-OneVision-7B & Qwen2-7B  & SigLIP-400M & 8.027B & 0.450B & 5.606\% \\
		\bottomrule
	\end{tabular}
	\caption{
    	Parameter scales for VILA-V1.5-8B, MiniCPM-V2.6-7B, and LLaVA-OneVision-7B, along with the increase due to the additional parameters introduced by \mname{}.
	    Here, ``MLLM Params'' refers to the parameter scale of the baseline, including the LLM, the visual encoder in the VLMs, and the projector between them.
	    ``Additional Params'' comes from the text encoder of the pretrained SigLIP, introduced by \mname{}.
	}
	\label{tab:add_param}
\end{table*}

\mname{} is a training-free, model-agnostic method that leverages pretrained VLMs.
The primary baselines,
VILA-V1.5-8B~\cite{lin2024vila}, MiniCPM-V2.6-7B~\cite{yao2024minicpm}, and LLaVA-OneVision-7B~\cite{li2024llava},
all integrate the vision encoder from SigLIP~\cite{zhai2023sigmoid}, so \mname{} only needs to introduce the additional parameter from the text encoder in SigLIP.
The parameter scales of them are reported in \cref{tab:add_param}.
The results indicate that the additional parameters from the text encoder amount to no more than 6\% of the original MLLM scale, which is negligible.
More importantly, these parameters are pretrained in VLMs and do not require tuning with the specific MLLMs.

\subsection{Latency Comparison}
 
\begin{figure}[t]
	\centering
	\includegraphics[width=\linewidth]{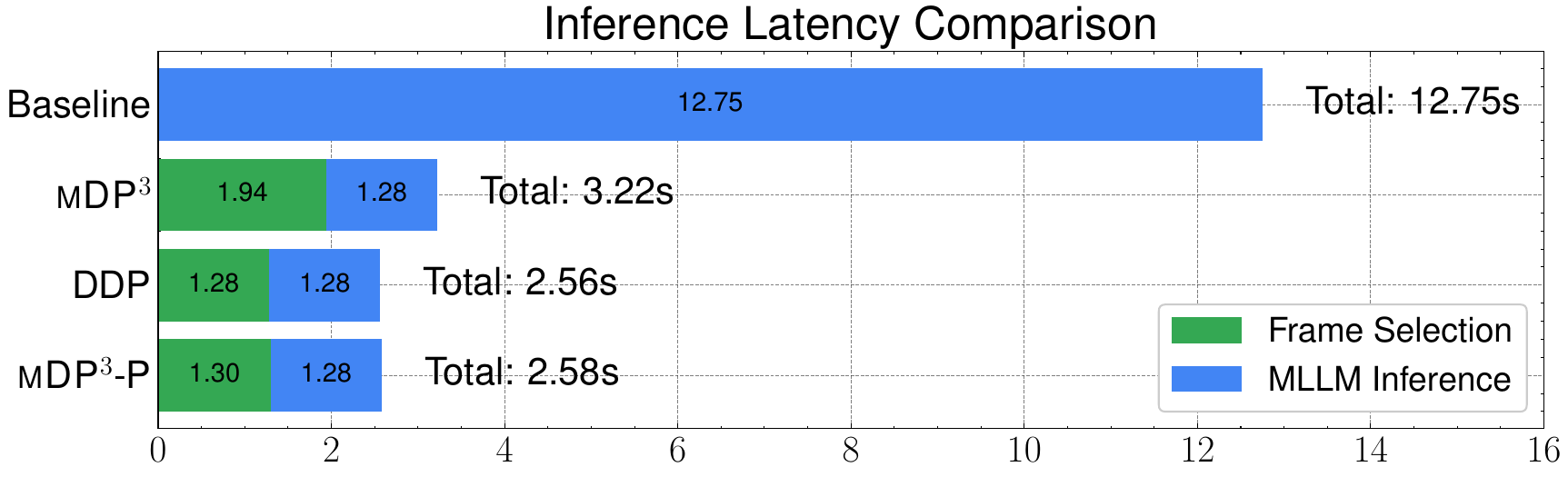}
	\caption{  
 	   Latency during inference with MiniCPM-V2.6 on Video-MME (without subtitles). 
 	   Where \mname{}-P refers o the \mname{} with parallel computation in dynamic programming~(\ie line 9 in \cref{alg}).
	   For direct comparison and better visualiztion, we omit the latency of identical processes across all compared models, including image loading and processing in the visual processor and encoder.  
	}
   \label{fig:latency}
\end{figure}

Our method, \mname{}, is training-free, which eliminates any additional latency during the training phase.
Accordingly, we report the average latency of various processes during inference with MiniCPM-V2.6 on Video-MME~(without subtitles), as illustrated in \cref{fig:latency}.
The baseline refers to MiniCPM-V2.6 processing all 128 candidate frames without applying frame selection.
%
Compared to the baseline, \mname{} selects only 8 essential frames as input to the MLLMs, achieving a significant speedup during inference.  
It reduces MLLM inference time by 11.47s (90\% of the baseline) while requiring only an additional 1.94s (15\% of the baseline) for frame selection.
\mname{}-P accelerates frame selection using parallel computation in dynamic programming~(\ie line 9 in \cref{alg}), requiring only an additional 1.30s for frame selection while reducing MLLM inference time by 11.47s (90\% of the baseline).
Furthermore, \mname{}-P has a latency comparable to the basic DPP method, which accounts only for list-wise diversity.  
This accords to our theoretical analysis, as both \mname{}-P and DPP share the same time complexity of $\mathcal{O}(nk^2)$.

\subsection{Additional Experiments about Ablation Study}

We conduct detailed ablation experiments on three Video-LLMs~(VILA-V1.5, MiniCPM-V2.6, LLaVA-OneVision) and three benchmarks~(Video-MME, MLVU, LVB$_{val}$), demonstrating the consistent superiority of \textsc{mDP}$^3$ over alternative strategies. 

\textsc{mDP}$^3$ attains peak scores across all configurations: 53.3/58.6/50.8 for VILA-V1.5, 58.0/66.6/57.1 for MiniCPM-V2.6, and 59.6/69.8/59.0 for LLaVA-OneVision.
The most significant improvements are observed on MLVU, with gains of +4.8 to +6.2 over the baselines, underscoring its effectiveness in complex video understanding tasks.

Baseline uniform sampling underperforms by 3.0 to 9.5 points across all models, highlighting the importance of frame selection.
Furthermore, intermediate variants exhibit limited gains due to partial adherence to our proposed principles; neglecting query relevance, list-wise diversity, or sequentiality fundamentally restricts selection quality. 
Moreover, the ablation results further validate the design choices of \mname{}. For example, although the cosine similarity variant outperforms uniform sampling by 2.1 to 4.8 points, it still lags behind \textsc{mDP}$^3$ by the same margin. This difference underscores the advantage of our multi-kernel similarity in the reproducing kernel Hilbert space~(RKHS), which better captures high-dimensional feature relationships.

\begin{table}[t]
	\small
    \centering
    \begin{tabular}{c|lccc}
    \toprule
	 	 & \multicolumn{2}{r}{Video-MME} & MLVU & LVB$_{val}$ \\
    \midrule
    \multirow{6}{*}{\rotatebox{90}{VILA-V1.5}}
	     &\ + uniform & 47.5 & 46.3 & 47.1 \\
      	 &\ + SigLIP  & 50.6 & 53.9 & 46.4 \\
      	 &\ + \mname{} w. MGK       & 48.9 & 48.6 & 53.5 \\
      	 &\ + DPP w. CMGK           & 51.8 & 56.8 & 47.1 \\
         &\ + \mname{} w. cos. sim. & 50.2 & 54.4 & 48.8 \\
		 & \cellcolor{gray!30}\textbf{\textsc{mDP}$^3$} & \cellcolor{gray!30}\textbf{53.3} & \cellcolor{gray!30}\textbf{58.6} & \cellcolor{gray!30}\textbf{50.8} \\
	\midrule
	\multirow{6}{*}{\rotatebox{90}{MiniCPM-V2.6}}
	     &\ + uniform & 52.6 & 55.4 & 51.2 \\
      	 &\ + SigLIP  & 56.3 & 60.3 & 51.4 \\
      	 &\ + \mname{} w. MGK       & 52.6 & 62.0 & 52.6 \\
         &\ + \mname{} w. cos. sim. & 53.1 & 63.0 & 54.1 \\
         &\ + DPP w. CMGK           & 55.2 & 64.7 & 52.1 \\
		 & \cellcolor{gray!30}\textbf{\textsc{mDP}$^3$} & \cellcolor{gray!30}\textbf{58.0} & \cellcolor{gray!30}\textbf{66.6} & \cellcolor{gray!30}\textbf{57.1} \\
    \midrule
	\multirow{6}{*}{\rotatebox{90}{LLaVA-OV}}
	     &\ + uniform & 53.6 & 59.3 & 54.2 \\
      	 &\ + SigLIP  & 57.0 & 62.7 & 51.6 \\
      	 &\ + \mname{} w. MGK       & 54.9 & 63.3 & 52.7 \\
         &\ + DPP w. CMGK           & 56.4 & 68.6 & 52.8 \\
         &\ + \mname{} w. cos. sim. & 55.9 & 65.2 & 54.6 \\ 
		 & \cellcolor{gray!30}\textbf{\textsc{mDP}$^3$} & \cellcolor{gray!30}\textbf{59.6} & \cellcolor{gray!30}\textbf{69.8} & \cellcolor{gray!30}\textbf{59.0} \\
	\bottomrule
    \end{tabular}
	\caption{
    	Ablation study of \textsc{mDP}$^3$ across three Video-LLMs~(VILA-8B, MiniCPM-V2.6, LLaVA-OneVision) and benchmarks~(Video-MME, MLVU, LVB$_{val}$).
    	}
	\label{tab:additional_ablation}
\end{table}

\subsection{More Experiments with Various Selection Size}

\begin{figure*}[t]
	\centering
	\includegraphics[width=\linewidth]{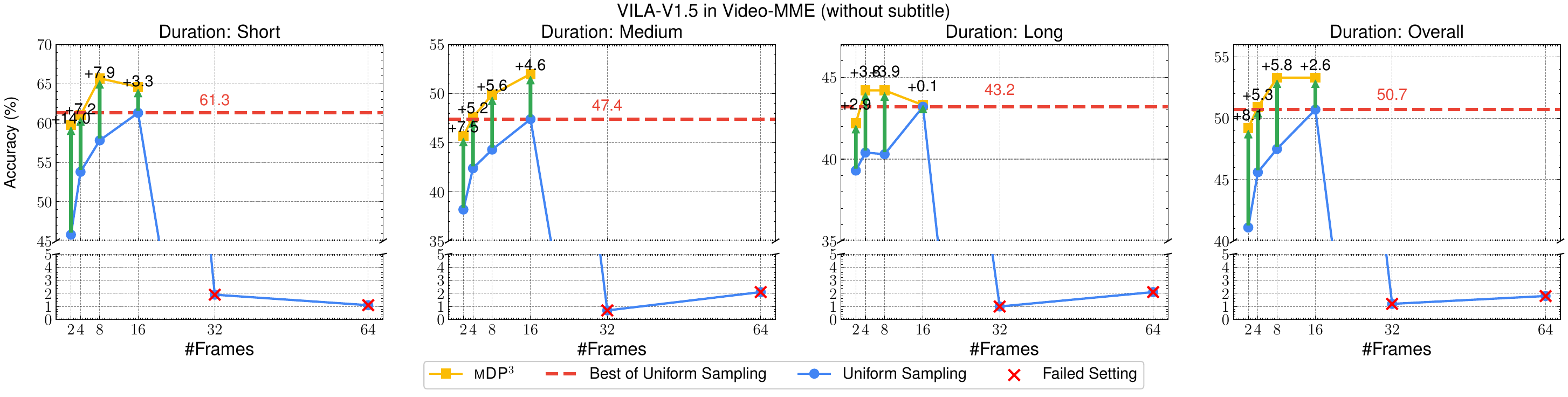}
	\caption{
	    Detailed accuracy (\%) on Video-MME (without subtitles) for VILA-V1.5 grouped by various durations, 
	    comparing uniform sampling with \mname{} selection from 128 candidate frames.
	}
   \label{fig:detailed_vila_var_frm}
\end{figure*}

\begin{figure*}[t]
	\centering
	\includegraphics[width=\linewidth]{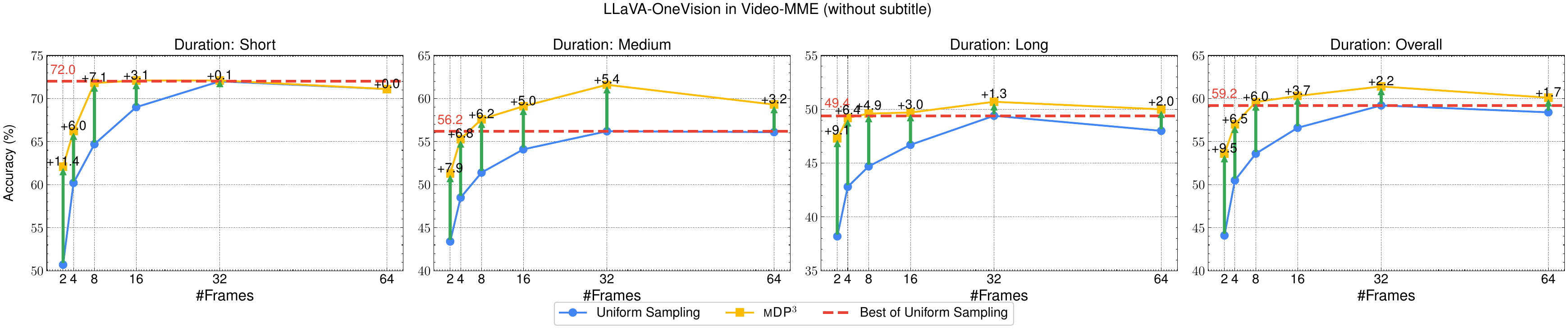}
	\caption{
	    Detailed accuracy (\%) on Video-MME (without subtitles) for LLaVA-OneVision grouped by various durations, 
	    comparing uniform sampling with \mname{} selection from 128 candidate frames.
	}
   \label{fig:detailed_llava_ov_var_frm}
\end{figure*}

We have reported the results for various selection sizes \(k\) in \cref{sssec:ravss} and \cref{fig:gain_on_videomme}. 
In this section, we provide additional results and more detailed discussions. 
In \cref{fig:detailed_llava_ov_var_frm,fig:detailed_vila_var_frm}, we present the performance of VILA-V1.5 and LLaVA-OneVision on Video-MME~(without subtitles) across various durations by varying the selection size \(k\).
The results show that, regardless of duration, VILA-V1.5’s performance improves with more input frames when \(k \leq 16\), while LLaVA-OneVision’s performance improves with additional frames when \(k \leq 32\). 
They employ the distinct context lengths and numbers of frames used during training. 
When the number of input frames exceeds the training stage, the input becomes out-of-distribution (OOD), leading to a performance drop. 
Specifically, when VILA-V1.5 receives more than 16 input frames, it experiences a catastrophic decline, whereas LLaVA-OneVision maintains some generalization ability when presented with more than 32 frames.
Although some Video-LLMs are trained with more frames to enhance long-video understanding, this approach is costly and the number of input frames cannot be increased indefinitely. 
Therefore, designing an effective frame selection algorithm is crucial.

There should be a meaningful discussion about the question: 
\textbf{``Are more input frames better for video understanding?''}
In \cref{tab:main_results}, we select 8 frames following the state-of-the-art frame selection approach, Frame-Voyage, settings to ensure a fair comparison. 
However, 8 frames may not be optimal for all Video-LLMs, benchmarks, and durations. 
Our answer to the question \textbf{``Are more input frames better for video understanding?''} is \textbf{No!}
We provide the following reasons:
\begin{enumerate}
	\item \textbf{Training Constraints:} Video-LLMs are typically trained on a limited number of frames. 
	When the number of input frames exceeds the training configuration, the model encounters OOD data, leading to degraded performance. 
	Although training with more frames can mitigate this issue, it is computationally expensive and unsustainable in the long run~(\ie, the number of input frames cannot be increased indefinitely.).
	
	\item \textbf{Inference Constraints:} Edge-deployed LLMs have limited resources, and proprietary models charge based on token usage. Consequently, processing excessive frames during inference stage is both resource-intensive and expensive.
	
	\item \textbf{Diminishing Returns:} 
	The marginal gain from adding frames diminishes exponentially (as indicated by the theoretical bound $\mathcal{O}((1/k)^k)$ in \cref{eq:detailed_bound} and supported by experiments in \cref{fig:gain_on_videomme,fig:detailed_llava_ov_var_frm,fig:detailed_vila_var_frm}), while the Transformer’s computational cost increases quadratically ($\mathcal{O}(n^2)$). Consequently, adding more frames is not cost-efficient.
	
	\item \textbf{Redundancy and Noise:} Incorporating additional frames may introduce redundant or irrelevant information, which can dilute salient features and add noise.

    \item \textbf{Latency and Real-Time Constraints:} In applications requiring real-time processing, increasing the number of frames can lead to higher latency, which may be unacceptable in time-sensitive scenarios~(as shown in \cref{fig:latency}).
\end{enumerate}
These considerations underscore the importance of developing an effective frame selection algorithm that balances performance improvements with computational efficiency and resource limitations. 
Instead of merely increasing the number of input frames, a well-designed selection strategy can extract the most informative frames, ensuring that the model focuses on quality over quantity and optimizes both accuracy and cost.

Additionally, the above discussion is supported across various durations, encompassing short, medium, and long video understanding scenarios.

\subsection{Qualitative Analysis}
We sample six representative cases from Video-MME~\cite{fu2024video}, illustrated in 
\cref{fig:case_study1,fig:case_study2,fig:case_study3,fig:case_study4,fig:case_study5,fig:case_study6}, 
to compare different frame selection methods, including
uniform sampling~(marked by~\includegraphics[height=0.8em]{./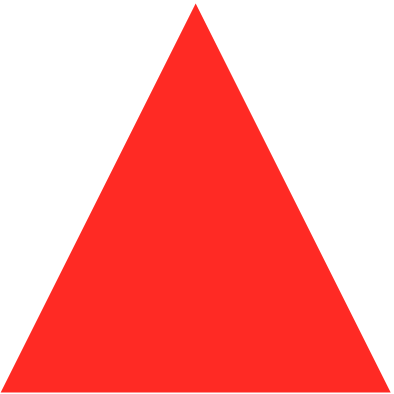} in the top right corner of frame)
and top-$k$ query-frame matching with SigLIP~\cite{zhai2023sigmoid}~(marked by~\includegraphics[height=0.8em]{./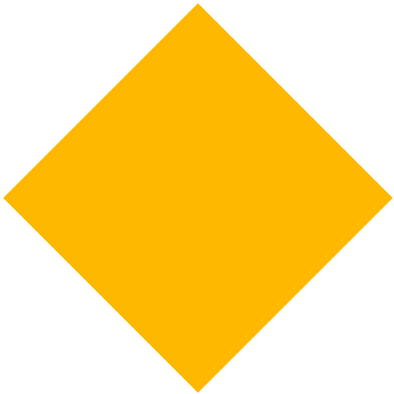}). 
The selection by \mname{} is marked by~\includegraphics[height=0.8em]{./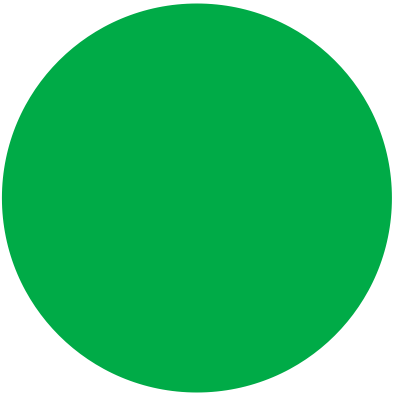}.
Besides, the ground-truth of option is colored by green.

We categorize the issues observed in the baseline frame selection process.
Especially, the point-wise top-$k$ query-frame matching with SigLIP, which is the best-performing baseline frame selection method in \cref{tab:abl}, but still falls significantly short compared to \mname{}. 
The observed issues are as follows:
\begin{enumerate}
    \item \textbf{Over-matching Specific Text} (\cref{fig:case_study1}):
    	As shown in \cref{fig:case_study1}, when asked, ``How many people are wearing ties in the video?'', the query-frame matching over-focuses on the keyword ``tie'', 
    	resulting in the selection of numerous duplicate frames featuring an individual with a prominent, visible tie.
    	This leads to the omission of frames where multiple people are wearing ties that are smaller or less noticeable.
    	In contrast, the selection by \mname{} demonstrates better balance between query relevance and frames diversity, effectively addressing this over-matching issue.
	\item \textbf{Failure of Counting Across Frames} (\cref{fig:case_study2,fig:case_study3}): 
		As shown in \cref{fig:case_study2,fig:case_study3}, when posed with questions such as, ``What is the total number of bird species visible in the video?'' or ``How many different kinds of animal faces appear in this video?'', these counting questions differ from the example in \cref{fig:case_study1}.
		Unlike the case in \cref{fig:case_study1}, the counted items are distributed across different frames and do not appear in a single frame.
		To answer accurately, it is necessary to select frames that include various relevant counted items.
		However, the baseline method of uniform sampling fails to identify specific counted items, while the query-frame matching approach struggles to avoid duplication, often focusing repeatedly on frames with the same item.
		In contrast, the frame selection by \mname{} demonstrates greater diversity, allowing for the inclusion of different frames with various items, thereby aiding MLLMs in accurately counting across frames.
	\item \textbf{Failure of Summarization} (\cref{fig:case_study4}):
		In this case, when the question is a summarization-type query such as ``What is the genre of this video,'' the frame selection requires a comprehensive representation of the entire video rather than focusing on specific event clips.
		Since there is no specific key text to match frames, the query-matching fails, performing even worse than uniform sampling.
		In contrast, the frame selection by \mname{} demonstrates a global understanding of the entire video, exhibiting good diversity and assisting MLLMs in summarizing the content effectively.	
	\item \textbf{Failure of Reverse Question Answering} (\cref{fig:case_study5}):
		This case is particularly interesting as it focuses on identifying events or items that do \emph{NOT} appear in the video, such as ``Which of the following elements does not appear in the video?'' 
		This type of reverse QA poses a significant challenge for query-frame matching since there is no key text for matching.
		However, \mname{} demonstrates strong performance, ensuring diversity in the selected frames and providing a more comprehensive representation of the video.
	\item \textbf{Failure of Transition Awareness} (\cref{fig:case_study6}):
		As shown in \cref{fig:case_study6}, when asked, ``How many times does the interviewed girl appear in the video?'', this question represents a special type of counting task: it not only requires counting across frames but also identifying the distinct \emph{number of times} the interviewed girl appears.
		While the concept of the ``interviewed girl'' is singular, the actual item to be counted is the ``number of appearances'', making temporality and sequentiality crucial.
		As shown in \cref{fig:case_study6}, the query-frame matching fails to recognize the transitions between appearances of the interviewed girl.
		This oversight leads to missing frames between appearances, resulting in multiple occurrences being merged into a single one.
		Consequently, MLLMs cannot accurately count the number of appearances with such a selection. 
		In contrast, \mname{} effectively captures the sequential nature of the video and recognizes transitions between appearances, enabling accurate counting.
\end{enumerate}

Additionally, this case study not only provides a qualitative analysis of various frame selection methods,
but also reveals the limitations of baseline frame selection and highlights the strengths of \mname{}. 
Besides, it raises several challenges in VidQA, and serves as a guide for constructing a more comprehensive benchmark to evaluate the video understanding capabilities of MLLMs.
This study is highly valuable to the technology community focused on MLLMs.

\begin{figure*}[t]
	\centering
  	\includegraphics[width=0.85\linewidth]{./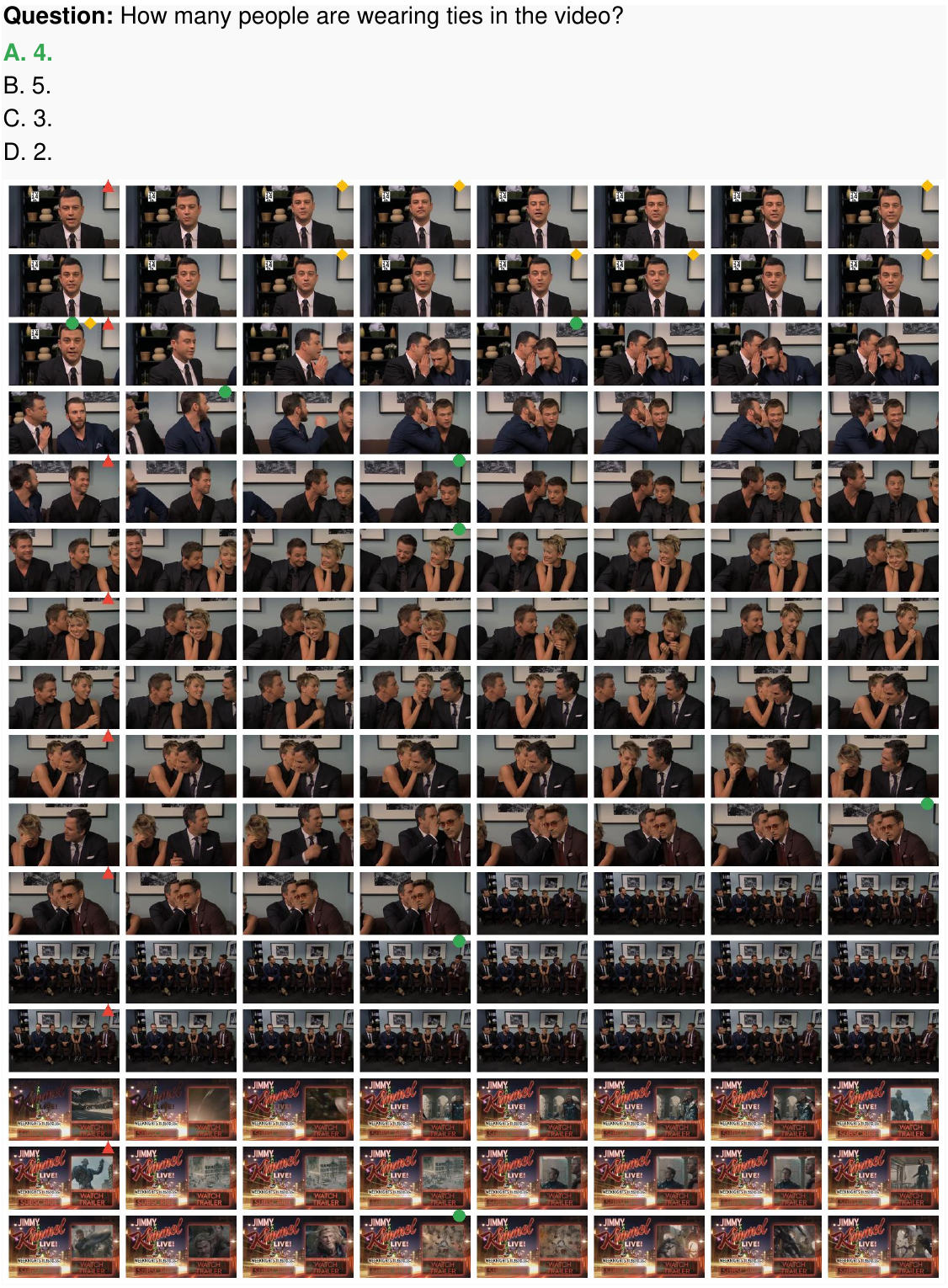}
	\caption{
		\includegraphics[height=0.8em]{./appendix/figures/t.png}: uniform sampling;
		\includegraphics[height=0.8em]{./appendix/figures/d.png}: top-$k$ query-frame matching with SigLIP;
		\includegraphics[height=0.8em]{./appendix/figures/r.png}: \mname{}.
		Over-matching of the keyword ``tie'' leads to duplicate frames being selected, omitting frames where multiple people wear ties. \mname{} addresses this issue by balancing query relevance with frame diversity.	}
	\label{fig:case_study1}
\end{figure*}

\begin{figure*}[t]
	\centering
  	\includegraphics[width=0.85\linewidth]{./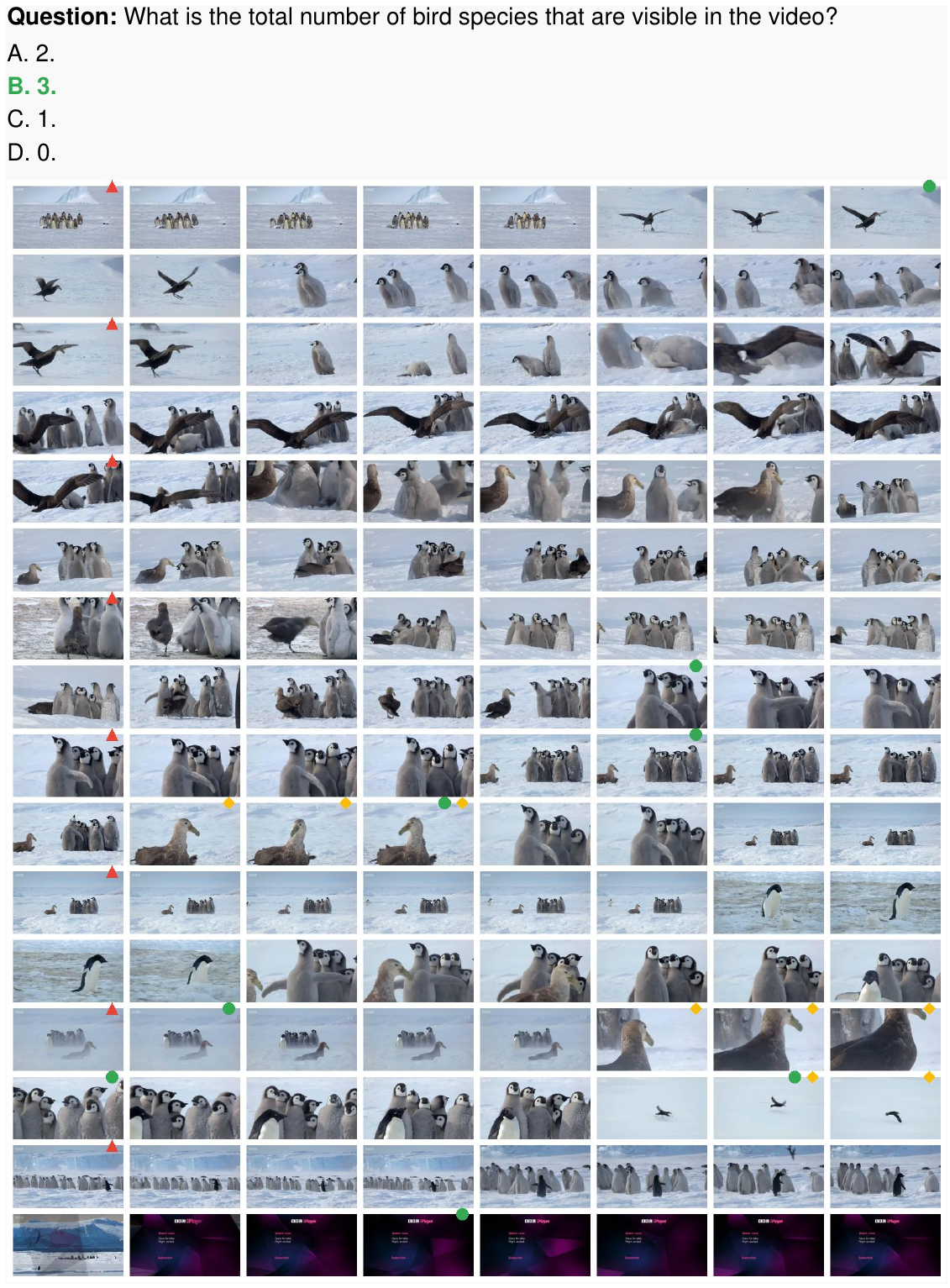}
	\caption{
		\includegraphics[height=0.8em]{./appendix/figures/t.png}: uniform sampling;
		\includegraphics[height=0.8em]{./appendix/figures/d.png}: top-$k$ query-frame matching with SigLIP;
		\includegraphics[height=0.8em]{./appendix/figures/r.png}: \mname{}.
		In counting tasks across frames, such as ``How many bird species or animal faces are in the video?'', uniform sampling and query-frame matching struggle with item duplication. 
		\mname{} improves diversity, aiding in accurate counting across frames.
	}
	\label{fig:case_study2}
\end{figure*}\begin{figure*}[t]
	\centering
  	\includegraphics[width=0.85\linewidth]{./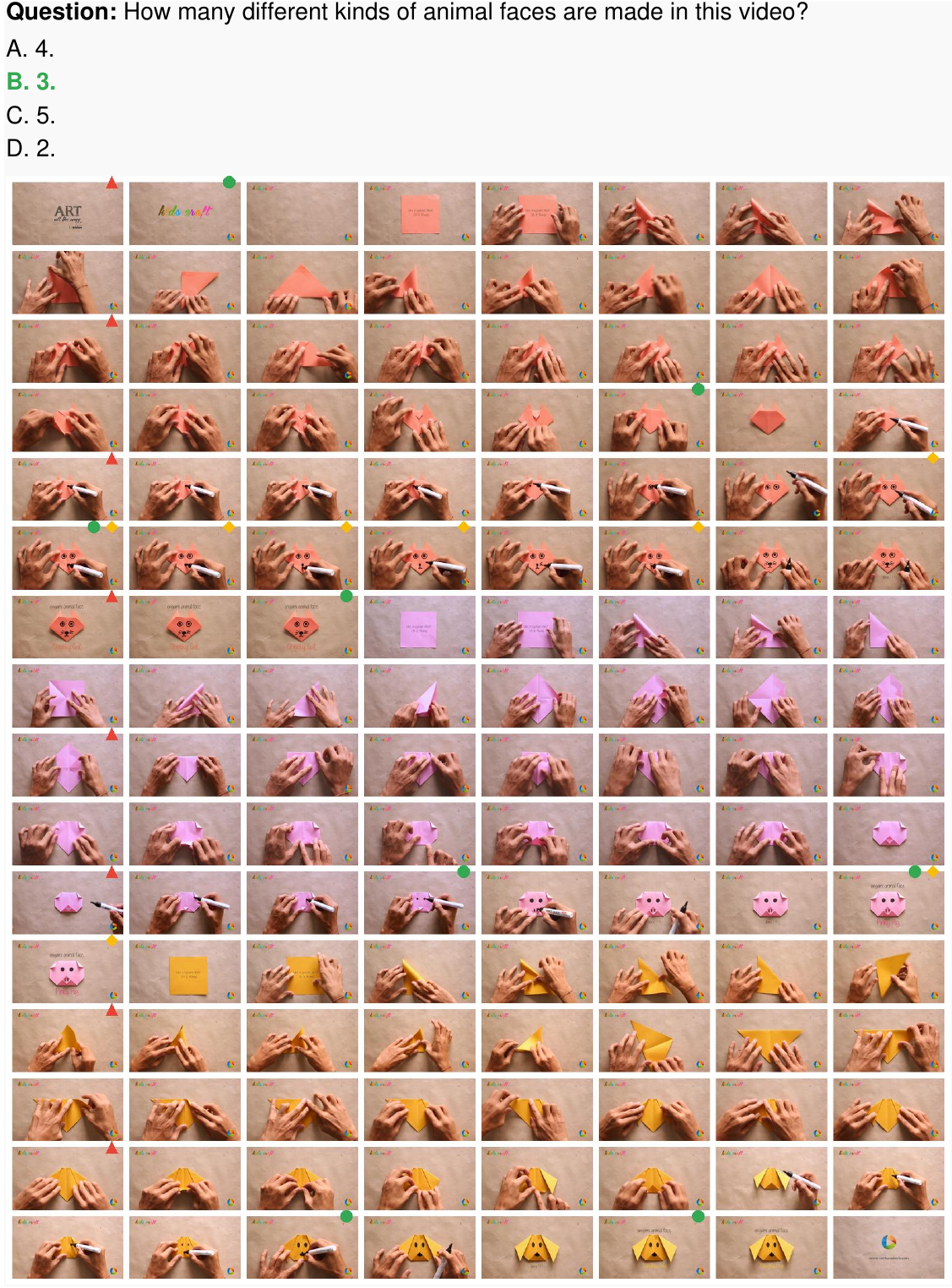}
	\caption{
		\includegraphics[height=0.8em]{./appendix/figures/t.png}: uniform sampling;
		\includegraphics[height=0.8em]{./appendix/figures/d.png}: top-$k$ query-frame matching with SigLIP;
		\includegraphics[height=0.8em]{./appendix/figures/r.png}: \mname{}.
		In counting tasks across frames, such as ``How many different kinds of animal faces are made in this video?'', uniform sampling and query-frame matching struggle with item duplication. 
		\mname{} improves diversity, aiding in accurate counting across frames.
		}
	\label{fig:case_study3}
\end{figure*}

\begin{figure*}[t]
	\centering
  	\includegraphics[width=0.85\linewidth]{./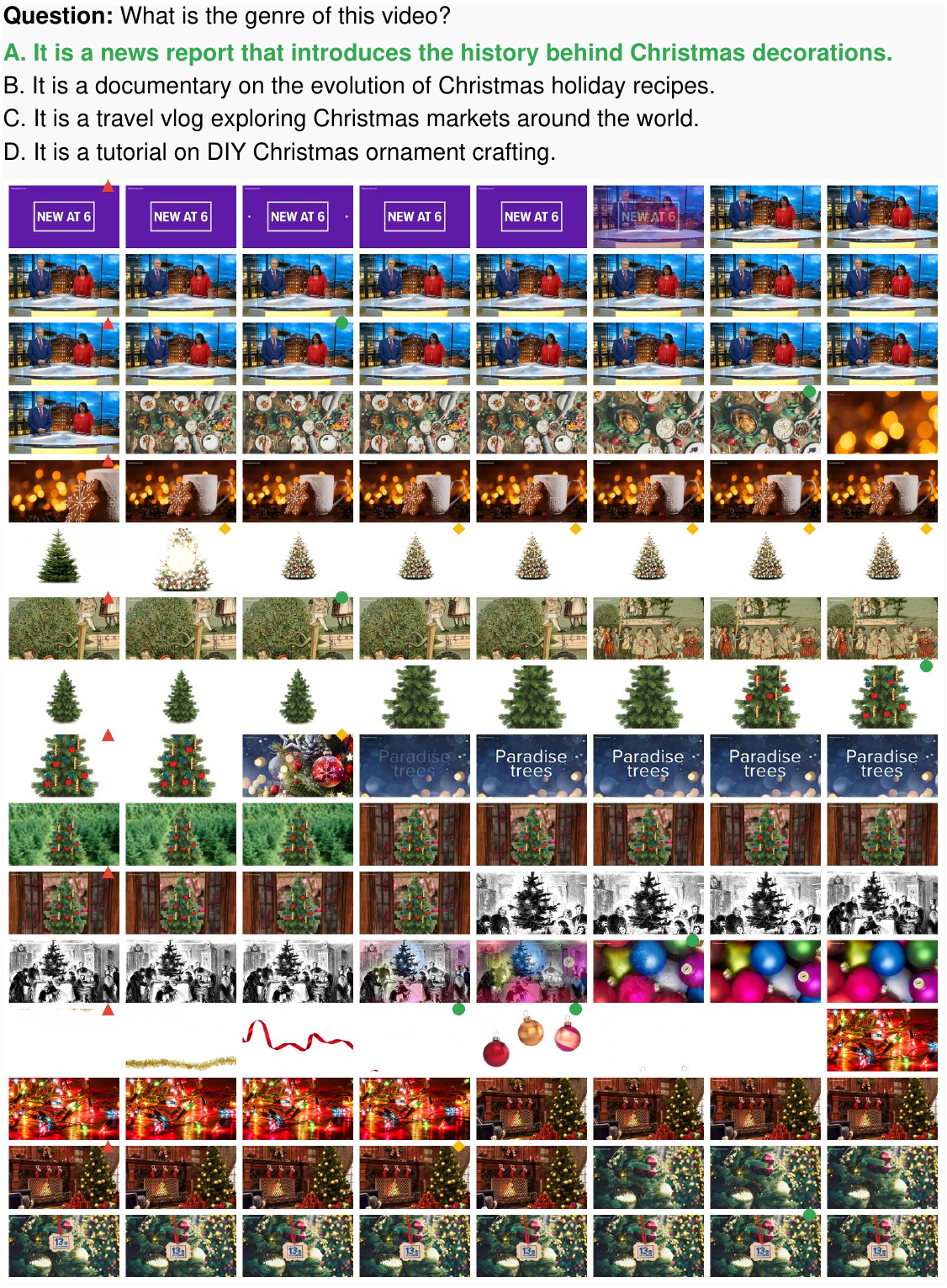}
	\caption{
		\includegraphics[height=0.8em]{./appendix/figures/t.png}: uniform sampling;
		\includegraphics[height=0.8em]{./appendix/figures/d.png}: top-$k$ query-frame matching with SigLIP;
		\includegraphics[height=0.8em]{./appendix/figures/r.png}: \mname{}.
		For summarization queries like ``What is the genre of this video?'', query-frame matching fails to represent the entire video. \mname{} shows a global understanding of the video, enhancing diversity and assisting in summarization.
		}
	\label{fig:case_study4}
\end{figure*}

\begin{figure*}[t]
	\centering
  	\includegraphics[width=0.85\linewidth]{./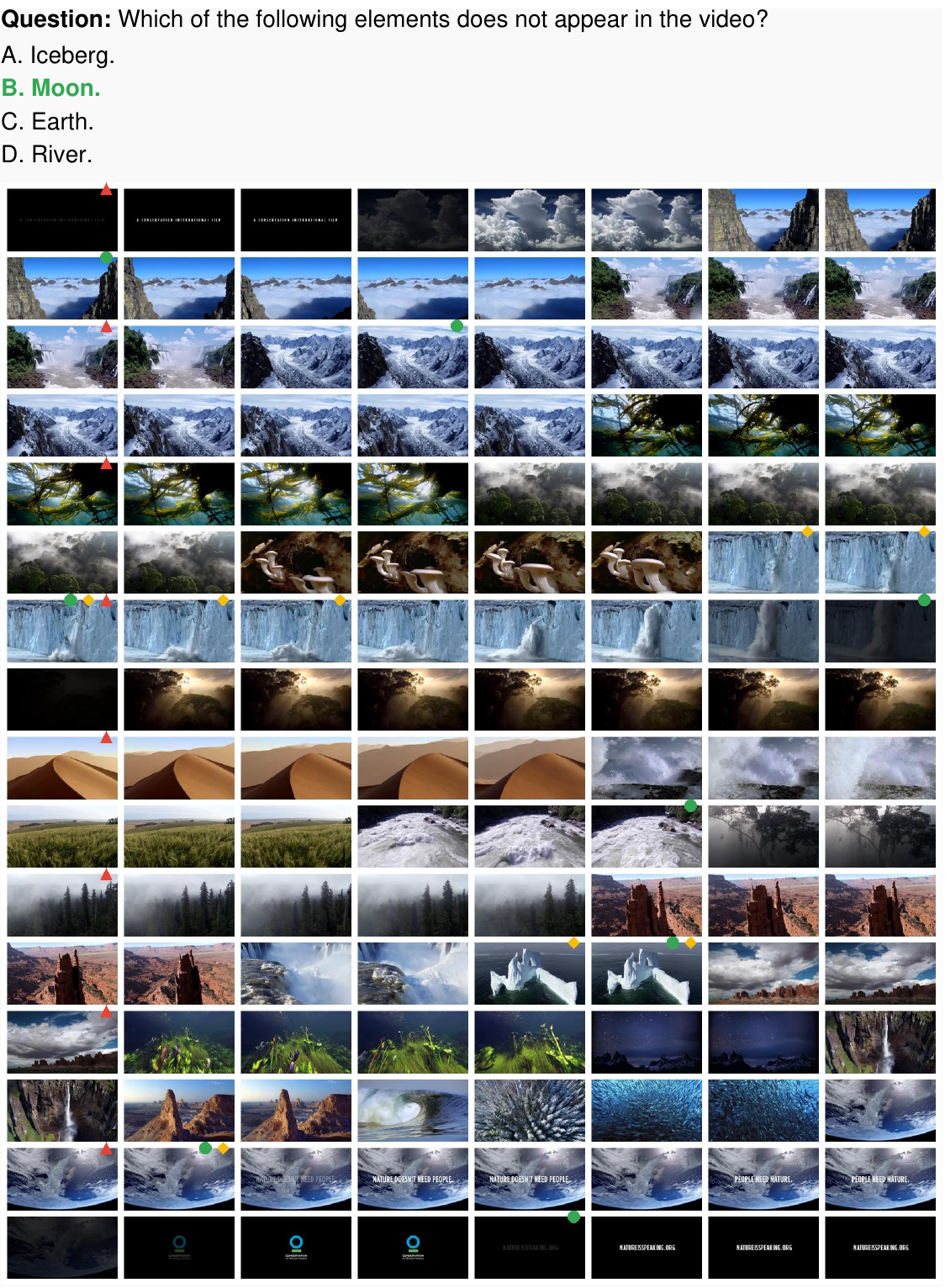}
	\caption{
		\includegraphics[height=0.8em]{./appendix/figures/t.png}: uniform sampling;
		\includegraphics[height=0.8em]{./appendix/figures/d.png}: top-$k$ query-frame matching with SigLIP;
		\includegraphics[height=0.8em]{./appendix/figures/r.png}: \mname{}.
		Reverse QA, such as ``Which elements \emph{DO NOT} appear in the video?'', presents a challenge due to the lack of a specific key text for matching.
		\mname{} excels by ensuring diversity in selected frames and providing a comprehensive video representation.
	}
	\label{fig:case_study5}
\end{figure*}

\begin{figure*}[t]
	\centering
  	\includegraphics[width=0.85\linewidth]{./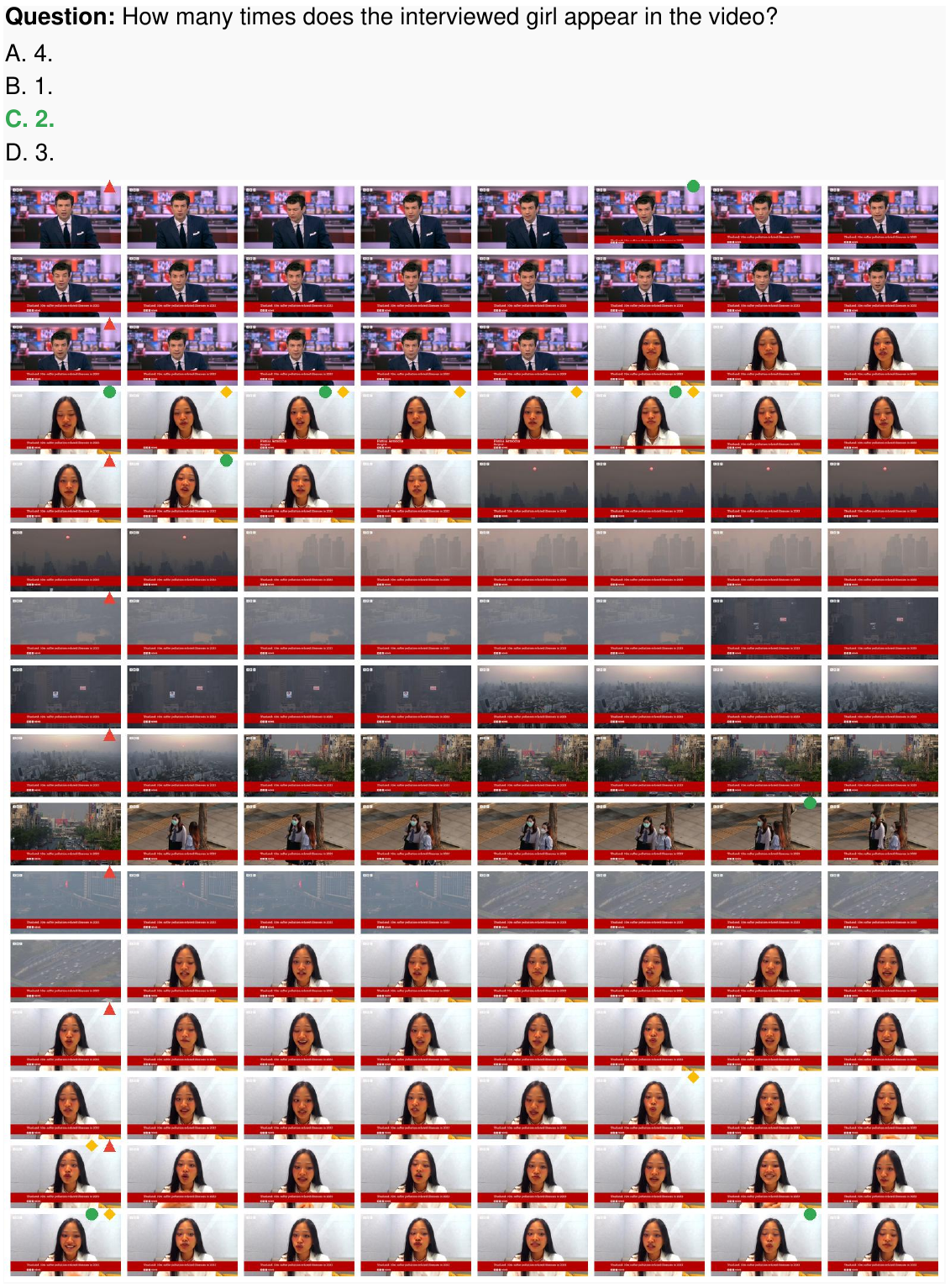}
	\caption{
		\includegraphics[height=0.8em]{./appendix/figures/t.png}: uniform sampling;
		\includegraphics[height=0.8em]{./appendix/figures/d.png}: top-$k$ query-frame matching with SigLIP;
		\includegraphics[height=0.8em]{./appendix/figures/r.png}: \mname{}.
		For questions like ``How many times does the interviewed girl appear?'', query-frame matching fails to capture the transitions between appearances. \mname{} accurately counts the number of appearances by considering sequentiality.
	}
	\label{fig:case_study6}
\end{figure*}

\newpage
\section{Limitations of \mname{} and Future Directions}
\mname{} is a training-free, model-agnostic method that, for the first time, fully addresses query relevance, list-wise diversity, and sequentiality.
Theoretically, \mname{} offers a \((1 - 1/e)\)-approximate solution to the NP-hard list-wise frame selection problem, achieving pseudo-polynomial time complexity and demonstrating its efficiency.
Empirically, \mname{} outperforms existing methods significantly, confirming its effectiveness and robustness.

However, \mname{} still has certain limitations that merit further exploration in future research.
\begin{enumerate}
    \item 
    	\textbf{Limitation:} The use of pretrained VLMs to develop a training-free, model-agnostic method is a double-edged sword. 
    	While \mname{} can be seamlessly integrated into existing Video-LLMs, pretrained VLMs often have limitations in understanding complex instructions.
    	\textbf{Future Directions:} Fortunately, \mname{} is highly adaptable for future extensions. 
    	Fine-tuning the VLMs within \mname{} with more complex instructions could significantly improve frame selection.
    	Specifically, although the selection process is discrete and not directly optimizable, paired selection data can be gathered, and contrastive learning methods (such as DPO) can be applied for fine-tuning.
    	The selection order could be supervised using existing LLMs, with the list-wise score finetuned to align with this supervision.
    \item 
    	\textbf{Limitation:} The selection size $k$ is fixed in \mname{}, and as shown in the case study, \mname{} may occasionally select some useless frames.
    	This issue can be mitigated by adjusting the trade-off between relevance and diversity, but such strategies are not feasible to apply on each sample.
		\textbf{Future Directions:} Therefore, exploring how to set an adaptive selection size $k$ is a promising area for future research. In \mname{}, during dynamic programming, the optimal selection for any size $i < k$ has been captured in the trace matrix $\mathcal{T}_{T,i}$. This provides a convenient framework for determining the optimal $k$, but the challenge of identifying the best $i < k$ still remains and warrants further investigation.
\end{enumerate}

\end{document}